\DeclareMathOperator*{\argmax}{argmax} %
\def\eqref#1{equation~\ref{#1}}
\def\1{\bm{1}}
\DeclareMathAlphabet{\mathsfit}{\encodingdefault}{\sfdefault}{m}{sl}
\SetMathAlphabet{\mathsfit}{bold}{\encodingdefault}{\sfdefault}{bx}{n}
\def\W{{\mathbf{W}}}
\crefname{equation}{eq.}{eq.}
\Crefname{equation}{Eq.}{Eq.}
\crefname{theorem}{thm.}{thms.}
\Crefname{Theorem}{Thm.}{Thms.}
\crefname{conjecture}{conj.}{conjs.}
\Crefname{Conjecture}{Conj.}{Conjs.}
\crefname{proposition}{Prop.}{Props.}
\Crefname{proposition}{Prop.}{Props.}
\crefname{definition}{Dfn.}{Dfn.}
\Crefname{definition}{Dfn.}{Dfn.}
\crefname{remark}{remark}{remark}
\Crefname{Remark}{Remark}{Remark}
\Crefname{algorithm}{Alg.}{Alg.}
\newtheorem{corollary}{Corollary}
\newtheorem{definition}{Definition}
\newcommand{\eg}{\textit{e}.\textit{g}. }
\newcommand{\ie}{\textit{i}.\textit{e}. }
\crefname{section}{Sec.}{Secs.}
\Crefname{section}{Sec.}{Secs.}
\crefname{equation}{Eq.}{Eqs.}
\Crefname{equation}{Eq.}{Eqs.}
\crefname{figure}{Fig.}{Figs.}
\Crefname{figure}{Fig.}{Figs.}
\crefname{table}{Tab.}{Tabs.}
\Crefname{table}{Tab.}{Tabs.}
\crefname{thm}{Thm.}{Thms.}
\Crefname{thm}{Thm.}{Thms.}
\crefname{conj}{Conj.}{Conjs.}
\Crefname{conj}{Conj.}{Conjs.}
\crefname{definition}{Dfn.}{Dfns.}
\crefname{definition}{Dfn.}{Dfns.}
\crefname{remark}{remark}{remarks}
\Crefname{Remark}{Remark}{Remarks}
\crefname{prop}{Prop.}{Prop.}
\Crefname{prop}{Prop.}{Prop.}
\Crefname{algorithm}{Alg.}{Alg.}
\crefname{appendix}{App.}{Apps.}
\Crefname{appendix}{App.}{Apps.}
\renewcommand{\paragraph}[1]{{\vspace{0.2mm}\noindent \bf #1}.}
\newcommand{\inp}{\mathbf{x}}
\newcommand{\wenc}{\mathbf{W}_{e}}
\newcommand{\wdec}{\mathbf{W}_{d}}
\newcommand{\inpadv}{\inp'}
\newcommand{\pert}{\boldsymbol{\delta}}
\newcommand{\grad}{\mathbf{g}}
\newcommand{\loss}{\mathcal{L}}
\newcommand{\Dinp}{\mathcal{D}_{\inp}}
\newcommand{\todo}[1]{\textcolor{orange}{\string (TODO)}}
\newcounter{proposition}
\newenvironment{proposition}{%
  \refstepcounter{proposition}%
  \noindent\textbf{Proposition \theproposition.}\space\itshape
}{%
  \par\vspace{0.5em}
}
\newtcolorbox{finding}{width=\linewidth, sharp corners=all, boxsep=2pt,left=2pt,right=2pt,top=2pt,bottom=2pt, colback=blue!5!white,colframe=black!95!white,breakable}
\newtcolorbox{findingTight}{width=\linewidth, sharp corners=all, boxsep=2pt,left=1pt,right=1pt,top=2pt,bottom=2pt, colback=blue!5!white,colframe=black!95!white}
\newtcolorbox{hypothesis}{width=\linewidth, sharp corners=all, boxsep=2pt,left=2pt,right=2pt,top=2pt,bottom=2pt, colback=green!5!white,colframe=green!50!black,breakable}
\newcommand{\pms}[1]{{\scriptsize\,$\pm$\,#1}}   %
\icmltitlerunning{Adversarial Vulnerability from Interference Between Features in Superposition}
\begin{document}

\twocolumn[
    \icmltitle{Adversarial Vulnerability from Interference Between Features in Superposition}
    
    \icmlsetsymbol{equal}{*}
    \begin{icmlauthorlist}
    \icmlauthor{Edward Stevinson}{imperial}
    \icmlauthor{Lucas Prieto}{imperial}
    \icmlauthor{Melih Barsbey}{imperial}
    \icmlauthor{Tolga Birdal}{imperial}
    \end{icmlauthorlist}
    \icmlaffiliation{imperial}{Department of Computer Science, Imperial College London, London, United Kingdom}
    \icmlcorrespondingauthor{Edward Stevinson}{e.stevinson22@imperial.ac.uk}
    \icmlkeywords{Machine Learning, Adversarial Attacks, Superposition, Interpretability, ICML}
    \vskip 0.3in
]

\printAffiliationsAndNotice{}  %

\begin{abstract}
Why do adversarial examples exist, and why do they transfer between models? Existing explanations appeal to high-dimensional geometry, non-robust patterns in the input, and decision boundary structure, but none provides a representation-level mechanism that explains why specific perturbations succeed and why attacks transfer between models. In this paper, we show that adversarial vulnerability can stem from \textit{efficient} information encoding in neural networks. Specifically, vulnerability can arise from \emph{superposition} -- the phenomenon where networks represent more concepts than they have dimensions, forcing non-orthogonal representation and thus interference. This interference causes perturbations targeting one representation to affect others, creating vulnerabilities determined by interference patterns. In synthetic settings with precisely controlled superposition, we establish that superposition \textit{suffices} to create adversarial vulnerability. The resulting attacks are predictable: PGD-discovered perturbations align with theoretically optimal perturbations derived from the interference geometry. Models trained on similar data develop similar interference patterns, explaining attack transferability. We then show that successful attacks on image classifiers exhibit the structure predicted by our proposed mechanism. These findings reveal that adversarial vulnerability can be a byproduct of networks' representational compression, complementing existing explanations based on data properties or architectural factors. Code for this paper is available at:
\href{https://github.com/Stevinson/adversarial-mechinterp}{\texttt{github.com/Stevinson/\allowbreak adversarial-\allowbreak mechinterp}}

\end{abstract}

\section{Introduction}

Despite extensive study \citep{Goodfellow+14,realPatchesEykholt+18,robustnessLimitsBartoldson24,scalingTrendsRobustness25}, adversarial examples remain poorly understood at a mechanistic level.
They can be reliably generated, yet we lack a representation-level account that predicts the structure of successful perturbations and explains why attacks transfer between models.
Such an account could inform more principled defences by linking robustness interventions to representation structure.
Existing explanations attributing vulnerability to decision boundary imperfections \citep{deepFoolSimpleMoosavi16}, properties of high dimensionality \citep{advSpheresGilmer+18}, or to non-robust patterns in training data \citep{Ilyas+19} provide valuable insights but limited quantitative predictions about attack characteristics.

We approach this gap through the lens of \emph{superposition}, a phenomenon in which neural networks represent more concepts, termed \textit{features}, than they have dimensions by encoding them as non-orthogonal directions in activation space \citep{ToySuperpositionElhage2022}. This packing increases representational capacity at the cost of introducing interference: activating one feature affects others in non-obvious ways. We demonstrate that gradient-based adversarial attacks can exploit this interference. Because features share representational dimensions, perturbations can simultaneously amplify a target class while suppressing the source through a combination of constructive and destructive interference.

Our analysis reveals a mechanistic pathway from data statistics to adversarial vulnerability: 1) input correlations constrain how features arrange in superposition, 2) these arrangements determine interference patterns, and 3) interference patterns dictate which perturbations succeed and why they transfer between models. This account yields testable predictions: attacks should align with optimal perturbations derived from the geometry, resulting in attack transferability when models have similar representations.

We validate these predictions in controlled settings where superposition can be precisely manipulated via dimensional bottlenecks. In synthetic settings, gradient-discovered attacks align with theoretically optimal perturbations (cosine similarity $>0.92$), and transfer rates increase from 5\% to 98\% as feature geometry converges across models. We then show that attacks on vision transformer (ViT) classifiers with engineered bottlenecks exhibit the structure our theory predicts. Our main contributions are:

\begin{itemize}[itemsep=2pt,topsep=2pt,leftmargin=*]
    \item A mechanistic account linking superposition geometry to adversarial vulnerability. We show how data correlations induce specific feature arrangements that directly determine realised perturbations, offering insight into attack transferability and class-specific susceptibility.
    \item Theoretical and empirical evidence that superposition is a \textit{sufficient} (though not necessary) condition for adversarial vulnerability. Non-orthogonal representations create exploitable interference, complementing rather than replacing existing explanations.
    \item Validation on ViT classifiers on CIFAR-10/100, STL-10, and TinyImageNet, demonstrating that the geometric signatures predicted by our theory appear in realistic settings.
\end{itemize}

\section{Background} 
\label{sec:background}
We first review adversarial examples and the attack methods used throughout the paper. We then introduce definitions for the tools we use in our analysis, namely the linear representation hypothesis (LRH) and superposition. Formally, let $\inp \in \mathcal{X}$ denote the input to a neural network (NN), $y \in \mathcal{Y}$ its label, and $\mathbf{h}^{(l)} \in \mathbb{R}^{d_l}$ the activation vector of the $l$-th layer.

\paragraph{Adversarial attacks}
Adversarial examples are inputs modified by perturbations that cause misclassification despite not changing the true class, \ie a human would still classify the perturbed input correctly. We define the notion of adversarial perturbations and attacks following \citep{Szegedy+14,costa2024deep}:

\begin{definition}[Adversarial Example]
Let $f: \mathcal{X} \to \mathcal{Y}$ be a classifier and $\mathbb{B}_p(\mathbf{x}, \varepsilon) = \{\mathbf{x}' : \|\mathbf{x}' - \mathbf{x}\|_p \leq \varepsilon\}$ the $\varepsilon$-ball around $\mathbf{x}$ in the $\ell_p$ norm. An input $\mathbf{x}' = \mathbf{x} + \boldsymbol{\delta}$ with $\mathbf{x}' \in \mathbb{B}_p(\mathbf{x}, \varepsilon)$ is an \textit{adversarial example} if $f(\mathbf{x}') \neq f(\mathbf{x})$ (untargeted) or $f(\mathbf{x}') = y_{\mathrm{target}}$ (targeted).
\end{definition}

We generate adversarial examples using projected gradient descent (PGD)~\citep{madryDeepLearning2018}, which iteratively follows the gradient of the classification loss $\mathcal{L}$ while projecting onto the constraint set. For $\ell_\infty$ attacks:
$$
    \mathbf{x}^{(t+1)} = \Pi_{\mathbb{B}_\infty(\mathbf{x}, \varepsilon)} \left( \mathbf{x}^{(t)} + \alpha \cdot \mathrm{sign}\left( \nabla_{\mathbf{x}} \mathcal{L}(\mathbf{x}^{(t)}, y) \right) \right)
$$
where $\Pi_{\mathbb{B}_\infty(\mathbf{x}, \varepsilon)}$ projects onto the $\ell_\infty$-ball of radius $\varepsilon$ around $\mathbf{x}$, and $\alpha$ is the step size. PGD follows the direction of steepest loss increase, finding perturbations that maximise loss increase within the constraint set.

\paragraph{Linear representation hypothesis (LRH)}
The LRH posits that NNs represent semantically meaningful properties of their inputs -- concepts like ``is in French'' -- as linear directions in activation space~\citep{LRHPark2024}. These concepts are denoted by a set of $M$ semantically meaningful \textit{features}, $\mathcal{C} = \{c_1, \ldots, c_M\}$.

\begin{definition}[Linear Representation Hypothesis (LRH)]
A neural network layer with activations $\mathbf{h}^{(l)} \in \mathbb{R}^{d_l}$ satisfies the LRH if it represents the latent features $\mathcal{C} = \{c_1, \ldots, c_M\}$ as feature vectors $\{\mathbf{v}_j\}_{j=1}^{M} \subset \mathbb{R}^{d_l}$ 
such that:
$$\mathbf{h}^{(l)}(\mathbf{x}) \approx \sum_{j=1}^M a_j(\mathbf{x}) \, \mathbf{v}_j$$
where $a_j(\mathbf{x}) \in \mathbb{R}$ represents the activation magnitude of feature $c_j$, and $\mathbf{v}_j$ is the corresponding direction.
The features are \textbf{linearly accessible}: inputs $\mathbf{x}_0, \mathbf{x}_1 \in \mathcal{X}$ that differ mainly in the value of feature $c_j$ while holding others approximately fixed (i.e., $a_j(\mathbf{x}_1) - a_j(\mathbf{x}_0) = k$ and $|a_i(\mathbf{x}_1) - a_i(\mathbf{x}_0)| < \lambda$ for all $i \neq j$ and some small $\lambda > 0$) satisfy:
$$
\mathbf{h}^{(l)}(\mathbf{x}_1) - \mathbf{h}^{(l)}(\mathbf{x}_0) \approx k \mathbf{v}_j
$$
where $k$ reflects the change in $c_j$.
\end{definition}

Substantial evidence supports the LRH. Features encoding language~\citep{NeuronsInHaystackGurnee2023}, board game states~\citep{othelloNanda23}, and geographic locations~\citep{templeton2024scaling} have been identified via linear probes~\citep{linearProbesAlain16} and sparse autoencoders (SAEs)~\citep{TowardsMonosemanticityBricken2023}. Crucially, intervening on these directions causally changes model outputs, confirming they are used in computation rather than merely correlated with it.

\paragraph{Superposition}
NNs are capable of representing far more features than there are available dimensions in activation space: $M \gg d_l$. For example, language models can reference many more place names than they have residual stream dimensions. This is possible through superposition: encoding concepts as non-orthogonal directions $\{\mathbf{v}_j\}_{j=1}^{M}$. Formally:

\begin{definition}[Superposition]
\label{def:superposition}
A set of $M$ features with representations $\{\mathbf{v}_j\}_{j=1}^M \subset \mathbb{R}^d$ is in superposition if:
\begin{enumerate}[noitemsep,leftmargin=*,topsep=0.01em]
    \vspace{1mm}
    \item \textbf{(Interference)} $M > d$ and there exist $i \neq j$ such that $\langle \mathbf{v}_i, \mathbf{v}_j \rangle \neq 0$.
    \item \textbf{(Recoverability)} There exists a decoder $\psi: \mathbb{R}^d \to \mathbb{R}^M$ s.t. feature activations can be approximately recovered: for $\mathbf{h} = \sum_j a_j \mathbf{v}_j$, we have $\psi(\mathbf{h})_j \approx a_j$ for all $j$.
\end{enumerate}
\end{definition}

The network trades representational capacity against feature interference by packing more features than dimensions. Condition 1 captures the cost of superposition: features interfere with each other, \ie activating one feature activates others. Condition 2 ensures features remain usable despite interference. Whether networks mitigate this interference through non-linear operations (\eg ReLU, softmax) \citep{NeuronsInHaystackGurnee2023} or use it constructively \citep{correlationsSuperpositionPrieto+26} is an open research question.

\paragraph{Interference}
Suppose the feature vectors $\{\mathbf{v}_i\}_{i=1}^M$ are in superposition, so activating one feature partially activates the others. The readout of feature $j$ is the projection of the representation $\mathbf{h}^{(l)} = \sum_{i=1}^{M} a_i \mathbf{v}_i$ onto its direction $\mathbf{v}_j$:
$$
    \mathbf{v}_j^\top \mathbf{h}^{(l)} = a_j + \sum_{i \neq j} a_i \left(\mathbf{v}_j^\top \mathbf{v}_i\right).
$$
The first term is feature $j$'s own activation, while the second is interference from co-active features, weighted by their non-orthogonality with $\mathbf{v}_j$. On natural inputs, sparsity keeps this interference manageable (most $a_i = 0$). However, an adversary can craft inputs that activate features in combinations natural data never produces.

\paragraph{Prevalence of superposition}
The condition $M > d$ is common in modern architectures. A clear illustration is ImageNet classifiers, which read $1{,}000$ class logits from representations of a few hundred dimensions (\eg ResNet-18: $d = 512$). Similarly, the unembedding matrix of an LLM maps a residual stream of dimension $d$ to a vocabulary of size $|\mathcal{V}| \gg d$ (\eg Llama~3~8B: $d = 4096$, $|\mathcal{V}| > 128{,}000$). The phenomenon is also pervasive in intermediate representations such as MLPs \citep{ToySuperpositionElhage2022, TowardsMonosemanticityBricken2023} and attention, where per-head QK and OV circuits route the residual stream through subspaces of dimension $d_{\text{head}}$, with recent work providing evidence that features are superposed across and within heads \citep{attentionSuperposition2He6, attentionSuperpositionJermyn25}. Superposition is therefore the norm in widely deployed architectures.

\section{Setup}

\subsection{Studying superposition via classification heads}

To study how adversarial attacks exploit superposition, we require a setting where feature vectors are known and interference is directly measurable. Classification heads provide a natural setting. For a classification head with weight matrix $\W \in \mathbb{R}^{k \times m}$, the logit for class $c$ is $z_c = \mathbf{w}_c^\top \mathbf{h}$, where $\mathbf{w}_c$ is row $c$ of $W$. These rows serve as \textit{class feature vectors} -- the directions along which the network reads evidence for each class. When the number of classes exceeds the representation dimension ($k > m$), or when semantically similar classes share representational structure, class vectors become non-orthogonal and interfere. 

This framing offers key advantages: feature vectors can be read directly from $W$ without requiring discovery methods (e.g., SAEs, probes) that introduce approximation error; ground-truth feature labels of the respective classes are known; and interference between classes is directly measurable via $\mathbf{w}_i^\top \mathbf{w}_j$. By inserting a dimensional bottleneck before the classification head, we can systematically control the degree of superposition.

We study classification heads in three settings:
\begin{itemize}[leftmargin=*,itemsep=2pt,topsep=2pt]
    \item \textbf{Synthetic argmax task:} A minimal bottleneck network with a closed-form input-to-class mapping. This permits computation of a ground truth after perturbation, control over superposition geometry, and direct comparison between empirical attacks and theoretical optima.
    \item \textbf{Classification with representation bottleneck:} ViTs trained on image datasets (CIFAR-10/100, STL-10, TinyImageNet). This tests whether the geometric signatures predicted by our theory appear in practical settings with multiple layers of nonlinearities.
    \item \textbf{Classification without bottleneck:} Even without a bottleneck layer, the reduction in representation rank under regularised training is sufficient to invoke the superposition regime and associated vulnerability.
\end{itemize}
We develop our theoretical framework and core findings in the synthetic setting (\cref{sec:toy-models}), then validate that the predicted structure emerges in realistic networks (\cref{sec:image-classifiers}).

\begin{figure*}[t]
\centering
\vspace{-3mm}
\includegraphics[width=\linewidth]{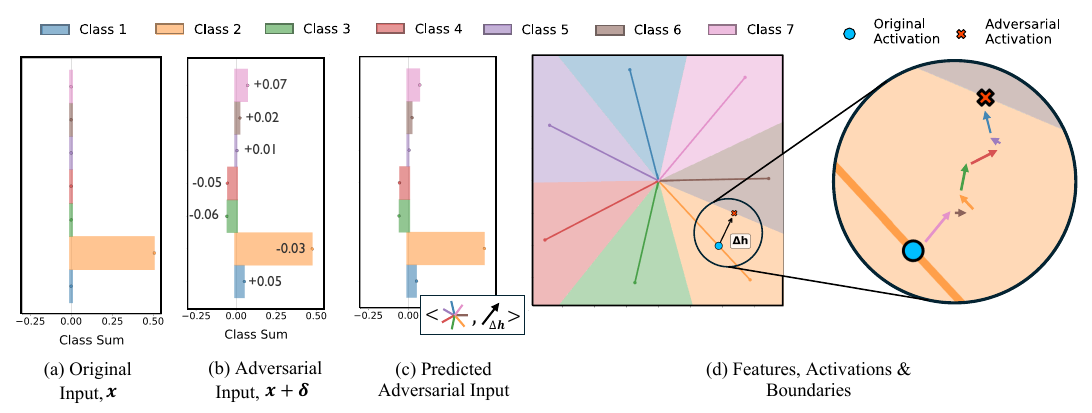}
\caption{\textbf{An adversarial attack exploiting superposition geometry} ($k=7$, $m=2$, $p=1$). \textbf{(a)} The original sample. \textbf{(b)} The adversarially perturbed sample, whose ground truth remains the same but is misclassified. The sign and magnitude of input perturbations are determined by the configuration of feature vectors. \textbf{(c)} The predicted optimal attack, matching the PGD-discovered adversary. \textbf{(d)} The original and adversarial sample in activation space. 
The coloured arrows show the effective class representations in the bottleneck.
The attack shifts the representation along $\Delta\mathbf{h}$, with each input perturbation signed and scaled by its class direction's alignment with this shift.}
\label{fig:adversarial-attack}
\vspace{-1mm}
\end{figure*}

\paragraph{Synthetic setting: Argmax task}
The synthetic task is constructed to: (1) represent class concepts as linear directions per the LRH; (2) induce controlled superposition via a dimensional bottleneck; (3) maintain a known mapping from inputs to the superposed features; and (4) permit exact computation of the true class after any perturbation, enabling testable predictions about adversarial mechanisms.

Let $\mathbf{x} \in \mathbb{R}^d$ with $d = kp$, partitioned as $\mathbf{x} = [\mathbf{x}^{(1)}, \dots, \mathbf{x}^{(k)}]$, where each block $\mathbf{x}^{(j)} \in
  \mathbb{R}^p$ holds $p$ evidence components for class $j$.
Following \citet{ToySuperpositionElhage2022}, we sample $\mathbf{x}$ from a sparse distribution $\Dinp$ over $[0, 1]^d$. The correlation structure of $\Dinp$ is varied across experiments (see \cref{sec:toy:correlations}); in the uncorrelated case, each component is independently zero with probability $S$ (i.e., $\mathbb{P}(x_i^{(j)} = 0) = S$) and otherwise drawn $x_i^{(j)} \sim \mathrm{Uniform}(0, 1)$. The classification target is $y = \argmax_{j \in [k]} \sum_{i=1}^p x_i^{(j)}$: each $x_i^{(j)}$ represents evidence for class $j$, and the label is the class with greatest total evidence.

We consider a two-layer linear bottleneck network with hidden dimension $m < k < d$:
$$
    \mathbf{h} = \wenc \, \mathbf{x} \in \mathbb{R}^m, \qquad \mathbf{z} = \wdec \, \mathbf{h} \in \mathbb{R}^k
$$
trained with cross-entropy loss. The bottleneck forces superposition: the network must compress $d = kp$ input dimensions into $m < k$ latent dimensions and then recover sufficient information to identify the class with the largest sum, inducing non-orthogonal class representations. The bottleneck is designed to mimic a residual stream in a transformer-like model, often thought of as a store of features in superposition that does not itself perform computation \citep{elhage2021mathematical, hanni2024mathematical}. To verify generality, we also test a nonlinear variant with ReLU activations and biases. Models achieve $>95\%$ accuracy.

We denote the columns of $\wenc$ by $\mathbf{v}_{c,r}$, where $c$ indexes the class and $r\in[p]$ indexes evidence components. Since each input coordinate contributes evidence to a single class, the encoder learns approximately shared directions within each class block, $\mathbf{v}_{c,r}\approx\mathbf{v}_c$. The decoder row $\mathbf{w}_c^\top$ of $\wdec$ reads out class-$c$ evidence and, empirically in this setting, aligns with this class-level encoder direction,
$\mathbf{w}_c\approx\mathbf{v}_c$.

\paragraph{Realistic Setting: ViT on CIFAR}
Our primary architecture is a ViT~\citep{dosovitskiy2020image} (12 layers, patch size $4 \times 4$, 6 heads, embedding dimension 384, MLP dimension 1536) trained from scratch on CIFAR-10/100~\citep{cifar10Krizhevsky09}, STL-10 
\cite{STLDatasetCoates11}, and TinyImageNet (Tiny-IN) \cite{tinyImageNetLeYang15}. 
A bottleneck layer is inserted before the classification head, with bottleneck ratio $m/k \in \{0.2, 0.3, 0.5, 1.0\}$ controlling superposition pressure (e.g.\ $m \in \{20, 30, 50, 100\}$ for CIFAR-100's 100 classes). We additionally study a non-bottleneck variant (\Cref{sec:results_no_bottleneck}), where weight decay alone induces an effective low-rank bottleneck.

For each configuration, we train 5 models with different random seeds. Unlike the synthetic setting where we have a closed-form mapping from inputs to classes, here we focus on whether the structure of adversarial perturbations reflects the interference geometry of the classification head. Full training details are provided in \cref{app:cifar:training-details}.

\vspace{-1mm}\subsection{Evaluation metrics}

In the synthetic setting, we define a successful adversarial example as satisfying: (1) the predicted class changes under perturbation, and (2) the true class (recomputed via group sums) remains unchanged. In the realistic setting, we use the standard definition where the predicted class changes while the original label is preserved. Attacks are generated via $\ell_\infty$- and $\ell_2$-norm PGD with step size $\epsilon/4$, 50 steps, and 5 restarts. For successful adversarial examples, $\mathbf{x'} = \mathbf{x} + \boldsymbol{\delta}$, we measure:
\begin{enumerate}[leftmargin=*,noitemsep,topsep=0pt]
    \item \textbf{Input perturbation profile (IPP):} The sign and magnitude of each $\delta_i$.
    \item \textbf{Latent attack alignment:} The similarity $\Delta\mathbf{h} \cdot \mathbf{w}_c$ between the latent attack vector $\Delta\mathbf{h} = \mathbf{h}_{\text{adv}} - \mathbf{h}_{\text{orig}}$ and class $c$'s representation.
    \item \textbf{Attack transferability:} Success rate of attacks generated on one model when applied to another.
    \item \textbf{Robust accuracy:} Fraction of examples correctly classified after perturbation with $\lVert \boldsymbol{\delta} \rVert_p \le \epsilon$.
\end{enumerate}

\vspace{-2mm}\section{Superposition geometry determines adversarial attacks} \label{sec:toy-models}
We investigate if and how adversarial attacks exploit the interference inherent in superposed representations. Specifically, we address three questions:
\begin{hypothesis}
\begin{enumerate}[noitemsep,topsep=0em,leftmargin=*,label=\textbf{\arabic*}.]
   \item \textit{Do adversarial perturbations exploit interference between superposed features?}
   \item \textit{Do input correlations shape the geometric arrangement of latent features?}
   \item \textit{Can shared geometry explain why attacks transfer between models?}
\end{enumerate}
\end{hypothesis}
\vspace{-1mm}
\subsection{Do attacks exploit interference between superposed features?}\label{sec:toy:do-attacks-exploit-interference}

An attack must move a sample across a decision boundary to change its class, but what determines the required input perturbations? To answer this, we first characterise the optimal attack strategy in our linear setting.

\begin{proposition} \label{prop:optimal}
The optimal input perturbation $\boldsymbol{\delta}$ that maximally increases the pairwise margin $z_k-z_j$ 
under constraint $\| \pert \|_2 \leq \epsilon$ satisfies
$$
    \pert \propto \wenc^\top \mathbf{n}_{jk},
    \qquad
    \mathbf{n}_{jk}=\mathbf{w}_k-\mathbf{w}_j,
$$
where $\mathbf{n}_{jk}$ is the normal to the pairwise decision boundary between classes $j$ and $k$.
\end{proposition}

\emph{Proof sketch.}
To increase the pairwise margin from class $j$ to class $k$, we maximise
$z_k'-z_j' = \mathbf{n}_{jk}^\top(\mathbf{h}+\Delta\mathbf{h})$
where $\Delta\mathbf{h}=\wenc\boldsymbol{\delta}$. The term
$\mathbf{n}_{jk}^\top\mathbf{h}$ is independent of $\boldsymbol{\delta}$, so
this reduces to $\max_{\|\boldsymbol{\delta}\|_2\leq\epsilon} \boldsymbol{\delta}^\top\wenc^\top\mathbf{n}_{jk}.$ 
By Cauchy--Schwarz, this is maximised when $\boldsymbol{\delta}\propto\wenc^\top\mathbf{n}_{jk}$. Proof in \cref{app:proofs}.\vspace{2mm}

\begin{corollary}[Interference drives vulnerability]
The adversarial perturbation magnitude for coordinate $r$ in class $i$ is $|\delta_{i,r}| \propto |\mathbf{w}_i^\top(\mathbf{w}_k-\mathbf{w}_j)|$, directly proportional to the interference between class feature vector $i$ and the class representations.
\end{corollary}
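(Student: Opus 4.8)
My plan is to obtain the Corollary as an immediate consequence of \cref{prop:optimal}, so I would take the optimal direction $\boldsymbol{\delta}\propto\mathbf{W}_e^\top\mathbf{n}$ with $\mathbf{n}=\mathbf{v}_k-\mathbf{v}_j$ as already established and simply read off its coordinates. The first step is to pin down the proportionality constant: the constraint $\|\boldsymbol{\delta}\|_2=\epsilon$ forces $\boldsymbol{\delta}=c\,\mathbf{W}_e^\top\mathbf{n}$ for the single scalar $c=\epsilon/\|\mathbf{W}_e^\top\mathbf{n}\|_2$, which is the same for every feature index $i$. Because $c$ is feature-independent, the per-feature magnitudes $|\delta_i|$ are governed entirely by the entries of $\mathbf{W}_e^\top\mathbf{n}$, and the proportionality claimed in the Corollary is exactly a statement about those entries with $c$ absorbed into the $\propto$.

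The key algebraic step is to unpack the indexing of $\mathbf{W}_e^\top\mathbf{n}$. By the experimental setup, $\mathbf{v}_i$ is the $i$-th column of the encoder $\mathbf{W}_e\in\mathbb{R}^{m\times d}$, i.e. $\mathbf{W}_e=[\mathbf{v}_1\mid\cdots\mid\mathbf{v}_d]$, so the $i$-th row of $\mathbf{W}_e^\top$ is precisely $\mathbf{v}_i^\top$. Hence the $i$-th coordinate is $(\mathbf{W}_e^\top\mathbf{n})_i=\mathbf{v}_i^\top\mathbf{n}=\mathbf{v}_i^\top(\mathbf{v}_k-\mathbf{v}_j)$. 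Substituting and taking absolute values gives $|\delta_i|=|c|\,\bigl|\mathbf{v}_i^\top(\mathbf{v}_k-\mathbf{v}_j)\bigr|\propto\bigl|\mathbf{v}_i^\top(\mathbf{v}_k-\mathbf{v}_j)\bigr|$, which is the assertion. To make the interpretation explicit I would then split $\mathbf{v}_i^\top(\mathbf{v}_k-\mathbf{v}_j)=\mathbf{v}_i^\top\mathbf{v}_k-\mathbf{v}_i^\top\mathbf{v}_j$, reading each inner product as the signed interference between feature $i$ and a class representation; the attack magnitude on feature $i$ is then the \emph{differential} interference, and it vanishes exactly when $\mathbf{v}_i\perp(\mathbf{v}_k-\mathbf{v}_j)$, recovering the orthogonal-feature behaviour of \cref{fig:toy-models:orthog-feature}.

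There is no real analytic obstacle here once \cref{prop:optimal} is in hand; the derivation is a single component-wise reading, and the Cauchy--Schwarz optimisation that does the actual work lives in the Proposition. The only points requiring care are bookkeeping ones. First, the notation is overloaded: in the Corollary $\mathbf{v}_i$ is the latent representation of an individual input feature $x_i$ (a column of $\mathbf{W}_e$), whereas $\mathbf{v}_j,\mathbf{v}_k$ denote the aligned class directions forming the boundary normal $\mathbf{n}$; I would state this convention up front so that the identity $(\mathbf{W}_e^\top\mathbf{n})_i=\mathbf{v}_i^\top\mathbf{n}$ is unambiguous. Second, I would note the mild nondegeneracy condition $\mathbf{W}_e^\top\mathbf{n}\neq\mathbf{0}$ (equivalently $\mathbf{n}\notin\ker\mathbf{W}_e^\top$), which holds generically when classes $j$ and $k$ have distinct representations, so that $c$ and the optimal direction are well defined. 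Everything else rests on the same linearity assumption ($\Delta\mathbf{h}=\mathbf{W}_e\boldsymbol{\delta}$, no ReLU or bias) already invoked for the Proposition.
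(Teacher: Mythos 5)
Your proposal is correct and takes essentially the same route as the paper: the paper also obtains the corollary as an immediate component-wise reading of Proposition~1, using that the $i$-th row of $\mathbf{W}_e^\top$ is $\mathbf{v}_i^\top$ so that $\delta_i \propto \mathbf{v}_i^\top(\mathbf{v}_k - \mathbf{v}_j)$. Your extra bookkeeping (the feature-independent normalising constant $c=\epsilon/\|\mathbf{W}_e^\top\mathbf{n}\|_2$ and the nondegeneracy condition $\mathbf{W}_e^\top\mathbf{n}\neq\mathbf{0}$) is sound and merely makes explicit what the paper leaves implicit.
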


This result reveals how superposition creates adversarial vulnerability: each input feature is perturbed proportionally to its geometric relationship with the class decision boundary. Under superposition, non-orthogonal representations cause semantically unrelated features to interfere with the class decision; adversarial perturbations exploit these dependencies to manipulate outputs.

\begin{table}[b]
\vspace{-1mm}
\centering
\caption{Alignment between PGD attacks and theoretically optimal perturbations (cosine similarity, mean $\pm$ std dev across 5 seeds). Attacks are $\ell_2$-PGD at $\epsilon=0.2$. Paired $t$-tests against the random baseline give $p < 10^{-5}$ for every row.}
\label{tab:pgd_theory_alignment}
\begin{tabular}{cccc}
\toprule
$k$ & $m$ & Sim. (PGD vs Theory) & Sim. (Random) \\
\midrule
6 & 2 & $0.97 \pm 0.02$ & $0.00 \pm 0.01$ \\
30 & 10 & $0.96 \pm 0.00$ & $0.00 \pm 0.01$ \\
90 & 30 & $0.92 \pm 0.00$ & $0.00 \pm 0.00$ \\
\bottomrule
\end{tabular}
\end{table}

\paragraph{Empirical validation} We test whether PGD-discovered attacks match these theoretical predictions. \Cref{fig:adversarial-attack} shows a typical adversarial example: an input of Class 2 (orange) and its perturbed value that misclassifies it as Class 6 (brown). The perturbations appear arbitrary but follow a precise pattern mediated by latent geometry. The sign and magnitude of $\mathbf{\delta}^{(j)}$ correlate strongly with how their class's feature vector $\mathbf{v}^{(j)}$ aligns with the latent attack vector $\Delta\mathbf{h}$. Positively aligned representations ($\mathbf{v}^{(j)} \cdot \Delta\mathbf{h} > 0$) see proportionally amplified inputs, while negatively aligned representations experience attenuation. \Cref{fig:adversarial-attack}(d) illustrates how these move the sample across the decision boundary.

For each configuration $(k, m)$ we train $5$ models with different random seeds and run $\ell_2$-PGD attacks. For each successful adversarial example we compute the cosine similarity between the PGD perturbation $\boldsymbol{\delta}$ and the theoretical optimum from \Cref{prop:optimal}. To account for the dependence between attacks generated from a single trained model, we conservatively compute the mean similarity \emph{per seed} and then test across seeds. As reported in \Cref{tab:pgd_theory_alignment}, PGD aligns strongly with the theoretical direction across configurations, and paired $t$-tests against a random-direction baseline are highly significant ($p < 10^{-5}$). As these theoretical directions leverage interference, we conclude that PGD systematically exploits superposition geometry.

\begin{finding}
Adversarial attacks systematically exploit interference between superposed features. Successful PGD attacks are predictable given the specific superposition geometry.
\end{finding}
\vspace{1mm}

\subsection{Do input correlations determine latent feature geometry?}\label{sec:toy:correlations}
We next investigate whether correlations in the input data determine the geometric arrangements of feature vectors, and the mechanism that underlies this relationship. We hypothesise \emph{interference avoidance}: frequently co-activating features arrange to minimise mutual interference. Stronger correlations impose tighter constraints on the learned geometry. Uncorrelated data permits any class-separating arrangement, while strong correlations determine a unique geometry up to rotation.

\vspace{-1mm}
\paragraph{Empirical validation} We test this hypothesis by introducing three correlation conditions in the training data, and test how these constrain the degrees of freedom in learned representations.
\begin{itemize}[noitemsep,topsep=0pt,leftmargin=*]
    \item \textit{Uncorrelated}: input dimensions sampled independently.
    \item \textit{Paired}: each input in class $i$ is coupled with an input in class $j$, such that when one activates, so does its pair.
    \item \textit{Global}: for each sample we draw a random phase $\phi$, and the probability that an input for class k is active is $p_k = p_{base} + A \, \cos(\phi + 2 \pi k/K)$, meaning adjacent classes in index frequently co-activate.
\end{itemize}

We quantify geometric similarity by comparing pairwise cosine similarity matrices between feature vectors across models trained with different random seeds. \Cref{fig:attack-transferability} displays the arrangement of $\textbf{v}^{(j)}$ for $m=2$, $k=7$.

For each correlation condition, we tested 45 seed pairs per condition. The results show a monotonic relationship: uncorrelated data yields highly variable geometries across seeds, paired correlations partially constrain the arrangements, whilst global correlations force near-identical geometries. Pairwise two-sample $t$-tests with Bonferroni correction confirm statistical significance ($p < 10^{-3}$ for all comparisons). 

\begin{finding}
Input correlations constrain feature geometry. Stronger correlations reduce geometric degrees of freedom, forcing different initialisations toward similar arrangements.
\end{finding}

\begin{figure}[t]
    \centering
    \includegraphics[width=\linewidth]{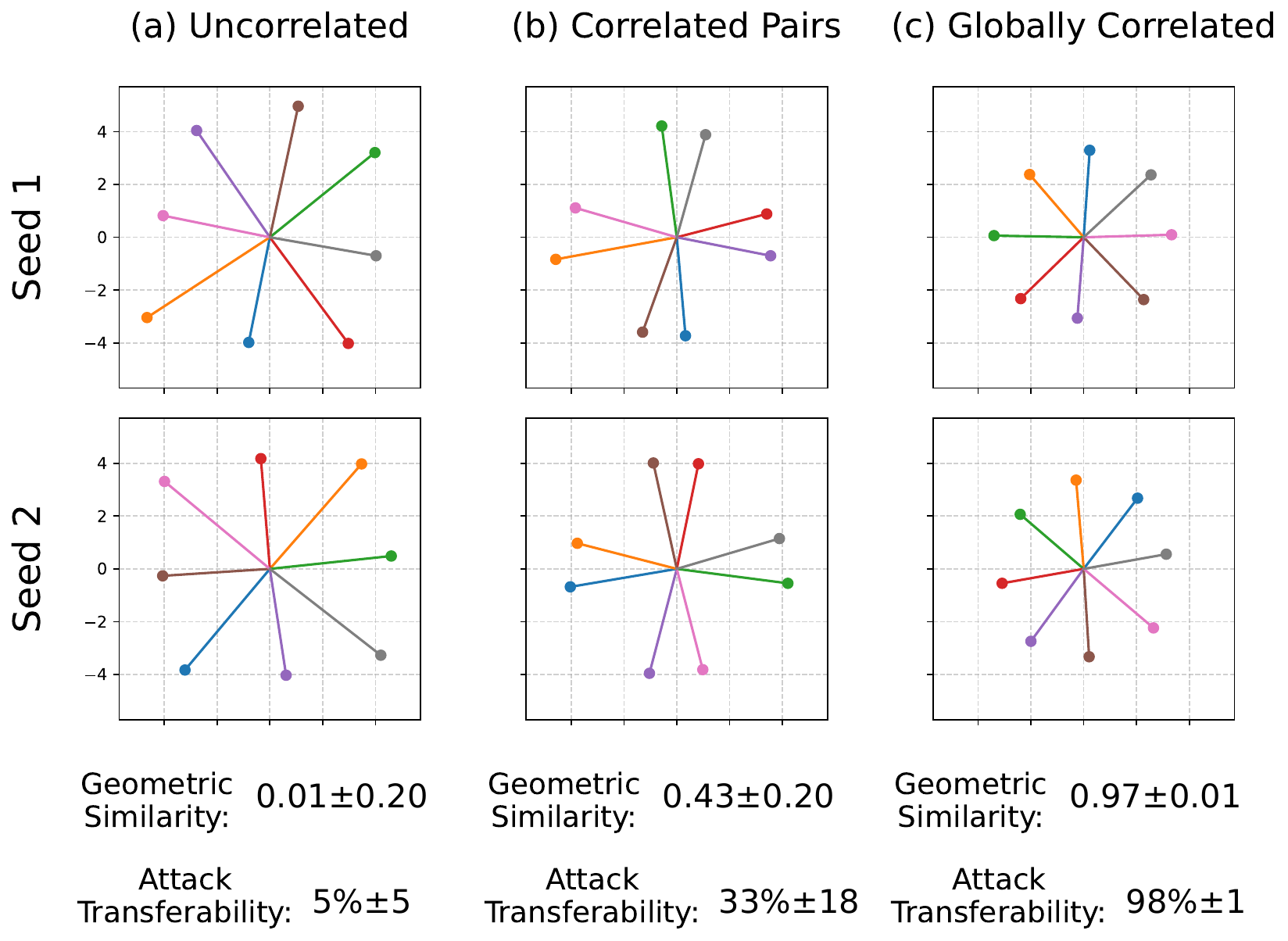}
    \caption{\textbf{Input correlations constrain representation geometry, which governs transferability}. Uncorrelated features are arranged at arbitrary angles across seeds; paired correlations constrain paired features to be orthogonal but leave the rest free; global correlations fix the cyclic ordering. Attack transfer scales with this constraint, from 5\% to 98\%.}
    \vspace{-3mm}
    \label{fig:attack-transferability}
\end{figure}

\subsection{Does shared geometry explain attack transferability?}
Attacks tailored for specific interference patterns should transfer between models with similar feature arrangements.

\begin{proposition} \label{prop:transfer}
Models with feature representations related by orthogonal transformation $\mathbf{Q}$ (where $\mathbf{Q}^\top \mathbf{Q} = \mathbf{I}$) share identical optimal attack directions in input space.
\end{proposition}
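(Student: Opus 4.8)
The plan is to invoke \cref{prop:optimal} for each of the two models and show that the orthogonal factor cancels. First I would make the hypothesis precise: if model $A$ has encoder $\mathbf{W}_e$ whose columns are the feature directions $\mathbf{v}_i$, then ``related by an orthogonal transformation'' means model $B$ has directions $\mathbf{v}_i' = \mathbf{Q}\,\mathbf{v}_i$ for a single fixed $\mathbf{Q}$ with $\mathbf{Q}^\top\mathbf{Q} = \mathbf{I}$, i.e. $\mathbf{W}_e' = \mathbf{Q}\mathbf{W}_e$. The key point to stress here is that $\mathbf{Q}$ acts \emph{uniformly} on every column, so the entire latent configuration is rigidly rotated or reflected as a whole rather than distorted feature-by-feature.

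The core step is then a one-line cancellation. For the same source/target pair $(j,k)$, \cref{prop:optimal} gives model $A$'s optimum as $\boldsymbol{\delta} \propto \mathbf{W}_e^\top \mathbf{n}$ with $\mathbf{n} = \mathbf{v}_k - \mathbf{v}_j$. For model $B$ the boundary normal becomes $\mathbf{n}' = \mathbf{v}_k' - \mathbf{v}_j' = \mathbf{Q}(\mathbf{v}_k - \mathbf{v}_j) = \mathbf{Q}\mathbf{n}$, so its optimum is $\boldsymbol{\delta}' \propto (\mathbf{W}_e')^\top \mathbf{n}' = (\mathbf{Q}\mathbf{W}_e)^\top \mathbf{Q}\mathbf{n} = \mathbf{W}_e^\top \mathbf{Q}^\top\mathbf{Q}\,\mathbf{n} = \mathbf{W}_e^\top\mathbf{n}$, where orthogonality collapses $\mathbf{Q}^\top\mathbf{Q}$ to $\mathbf{I}$. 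Hence $\boldsymbol{\delta}' \propto \boldsymbol{\delta}$, and since both are rescaled to the common budget $\|\cdot\|_2 = \epsilon$, the two optimal input perturbations coincide as directions.

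As a strengthening and sanity check, I would observe that because the decoder is taken to be the encoder transposed, the logits are themselves invariant: $z_j' = (\mathbf{v}_j')^\top \mathbf{W}_e'\,\mathbf{x} = \mathbf{v}_j^\top \mathbf{Q}^\top\mathbf{Q}\mathbf{W}_e\,\mathbf{x} = z_j$, so models $A$ and $B$ realise the \emph{same} input--output map and the coincidence of optimal attacks is immediate; the \cref{prop:optimal} computation merely makes the mechanism explicit. The main obstacle is therefore not the algebra but pinning down the hypothesis cleanly: I must argue that the geometries learned across independent seeds genuinely differ (at most) by a single global orthogonal map under strong input correlations, and clarify that ``identical direction'' is meant up to the positive scaling fixed by the norm constraint (the source/target sign ambiguity having already been resolved by the ordered pair $(j,k)$). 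I would close by noting that this is precisely what turns the empirical transferability observation into a mechanism: orthogonally equivalent geometries are exactly the regime in which attacks transfer perfectly.
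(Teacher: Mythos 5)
Your proof is correct and takes essentially the same route as the paper's: invoke \cref{prop:optimal} for each model, substitute $\mathbf{W}_e' = \mathbf{Q}\mathbf{W}_e$ and $\mathbf{n}' = \mathbf{Q}\mathbf{n}$, and let $\mathbf{Q}^\top\mathbf{Q} = \mathbf{I}$ collapse the product so that both optima are proportional to $\mathbf{W}_e^\top(\mathbf{v}_k - \mathbf{v}_j)$, hence equal under the shared norm constraint. Your ``sanity check'' is in fact stronger than the paper's argument and goes unremarked there: with $\mathbf{W}_d = \mathbf{W}_e^\top$, model $B$'s input--logit map is $(\mathbf{Q}\mathbf{W}_e)^\top(\mathbf{Q}\mathbf{W}_e)\,\mathbf{x} = \mathbf{W}_e^\top\mathbf{W}_e\,\mathbf{x}$, identical to model $A$'s, so the two models realise the same function and \emph{every} attack transfers, not merely the Cauchy--Schwarz optimum. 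Your closing concern --- whether independently seeded models truly differ only by a single global orthogonal map --- is an empirical matter that the proposition takes as hypothesis (the paper addresses it separately via the correlation experiments), so it imposes no further burden on the proof itself.
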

\vspace{-2mm}
\emph{Proof sketch.}
Optimal perturbations are invariant under orthogonal transformation: $\delta_i' \propto (\mathbf{v}_i')^\top(\mathbf{v}_k' - \mathbf{v}_j') = \mathbf{v}_i^\top \mathbf{Q}^\top \mathbf{Q}(\mathbf{v}_k - \mathbf{v}_j) = \mathbf{v}_i^\top(\mathbf{v}_k - \mathbf{v}_j) \propto \delta_i$. Proof in \cref{app:proofs}.
\vspace{-2mm}

\paragraph{Empirical validation} Using the same seed pairs as the previous analysis, we generate attacks on source models and evaluate transfer success. Transfer rates correlate strongly with geometric similarity (\cref{fig:attack-transferability}): global correlations yield $98\% \pm 11\%$ transfer, paired correlations produce $33\% \pm 18\%$, and uncorrelated data shows only $5\% \pm 5\%$.

Each perturbation creates constructive interference that pushes representations across decision boundaries. When applied to a model with different geometry, interference patterns that were previously constructive become destructive, causing the attack to fail.

\begin{finding}
Attack transferability is governed by shared interference patterns. Models with similar superposition geometry exhibit high transfer rates.
\end{finding}

\subsection{Does reducing superposition suppress attacks?}
If superposition creates vulnerability through interference, removing it should eliminate adversarial examples. We test this prediction with three experiments.

\paragraph{Non-compressed bottleneck} When $m = k$ the network learns to represent each $\mathbf{v}^{(j)}$ orthogonally. Any perturbation that changes the model's prediction also changes the ground truth class, \eg moving a sample from class A to class B requires making the sum of class B features exceed class A's -- genuinely transforming it into a class B sample. We observe zero successful adversarial examples across 1000 attempts at all tested $\epsilon$ values.

\paragraph{Isolating superposition from capacity} When fixing $m$ and varying $k$, robust accuracy decreases monotonically with $k/m$ (\cref{tab:app:correlation_results}), demonstrating that vulnerability scales with the degree of superposition.

\paragraph{Isolating superposed features} Constraining one class vector $\mathbf{v}_\perp$ to remain orthogonal to all others during training causes attacks between classes that remain in superposition to leave the orthogonal class's inputs unperturbed.

\begin{finding}
Adversarial attacks use their budget to exploit those features in superposition.
\end{finding}

\begin{findingTight}
    Our theoretical and empirical results establish a \textbf{mechanistic pathway}. Specifically: (i) input correlations constrain feature arrangements in superposition, (ii) these geometric arrangements determine interference patterns, (iii) interference patterns dictate optimal perturbations via $\boldsymbol{\delta} \propto \mathbf{W}_e^\top(\mathbf{w}_k - \mathbf{w}_j)$ (\cref{prop:optimal}), and (iv) shared geometric constraints yield similar interference patterns across models, enabling transferability (\cref{prop:transfer}):
    \begin{minipage}{1.01\textwidth}
\vspace{0.07em}
\textbf{Correlations} $\xrightarrow{\text{constrain}}$ \textbf{Feature Geometry} $\xrightarrow{\text{determines}}$ 

\hspace{1cm} \textbf{Interference Patterns} $\xrightarrow{\text{enable}}$ \textbf{Transferability}
\vspace{0.001em}
\end{minipage}
\end{findingTight}

\vspace{-1mm}
\section{Attacks on image classifiers} 

\label{sec:image-classifiers}

\begin{figure*}[t]
\centering
\includegraphics[width=\linewidth]{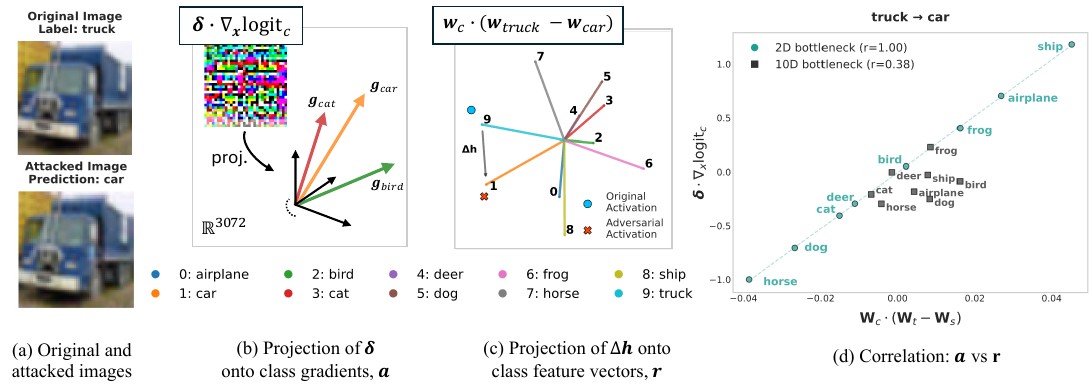}
\caption{\textbf{Attacks exploit superposition geometry in ViTs.} \textbf{(a)}~An attack misclassifies a truck as a car. \textbf{(b)}~The perturbation projected onto class attack gradients in input space. \textbf{(c)}~The latent attack vector projected onto class feature vectors. \textbf{(d)}~When the bottleneck is tight ($m=2$), the latent profile $\mathbf{r}$ strongly predicts the input-space profile $\mathbf{a}$. As interference decreases ($m=10$), the correlation diminishes.}
\label{fig:input-latent-correlation}
\end{figure*}

We now test whether the geometric account established in our synthetic setting extends to image classifiers. Our theory makes three predictions:
\begin{hypothesis}
\begin{enumerate}[noitemsep,topsep=0.01pt,leftmargin=*]
    \item Interference between class feature vectors should predict attack structure in input space.
    \item This relationship should strengthen as bottleneck dimension decreases, increasing superposition pressure.
    \item Interference patterns should be consistent across random seeds, reflecting structure rather than optimisation noise.
\end{enumerate}
\end{hypothesis}
\vspace{1mm}
Unlike the synthetic setting, image classifiers involve multiple layers and nonlinearities mediating the relationship between pixel perturbations and representation-space effects. The key question is whether classification head geometry still governs attack structure despite this complexity. 

We introduce our method to test these predictions under this increased model complexity (\cref{sec:measuring_attack}). We first examine models with a representation bottleneck ($m < k$;  \cref{sec:results_bottleneck}), then show that superposition-induced vulnerability can arise even without such a bottleneck, due to the tendency of NNs to learn low-rank representations (\cref{sec:results_no_bottleneck}).

\subsection{Measuring attack structure}
\label{sec:measuring_attack}
To test whether latent interference predicts attack structure of the inputs, we define: (i) an input-space profile measuring how perturbations affect each class, (ii) a latent-space profile measuring geometric interference between class features.

\paragraph{Input-space profile} For each class $c$, define the \textit{class attack gradient} $\mathbf{g}_c(\mathbf{x}) = \partial z_c / \partial \mathbf{x}$, the direction in input space that maximally increases the logit for class $c$. For a perturbation $\boldsymbol{\delta}$, the input-space attack profile is:
$$
    \mathbf{a}(\mathbf{x}, \boldsymbol{\delta}) = \left[ \mathbf{g}_1(\mathbf{x})^\top \boldsymbol{\delta}, \ldots, \mathbf{g}_k(\mathbf{x})^\top \boldsymbol{\delta} \right]
$$
This measures how much the perturbation pushes toward each class in pixel space.

\paragraph{Latent-space profile} For an attack from source class $s$ to target class $t$, the latent-space attack profile is:
$$
    \mathbf{r}_{s \to t} = \left[ \mathbf{w}_1^\top (\mathbf{w}_t - \mathbf{w}_s), \ldots, \mathbf{w}_k^\top (\mathbf{w}_t - \mathbf{w}_s) \right]
$$
This measures how each class feature vector aligns with the attack direction $\mathbf{w}_t - \mathbf{w}_s$ in the bottleneck, analogous to the latent attack alignment in \cref{sec:toy-models}. This profile depends only on the classification head weights and requires no input.

\paragraph{Correlation metric} For each successful attack, we compute the Pearson correlation between $\mathbf{a}$ and $\mathbf{r}$, excluding source and target classes. The latter are determined by the attack objective ($r_t > 0$, $r_s < 0$ by construction) and their inclusion would trivially inflate the correlation. 
We denote the mean correlation across successful attacks as $\bar{\rho}$.

\paragraph{Interference metrics} We compute the interference matrix $S_{ij} = \cos(\mathbf{w}_i, \mathbf{w}_j)$ and measure: (1) \textit{interference magnitude} $I = \sfrac{2}{k(k-1)} \sum_{i < j} |S_{ij}|$, the mean absolute off-diagonal similarity; and (2) \textit{interference consistency} $\kappa$, the Pearson correlation between interference matrices across seeds, with significance testing via $z$-scores against random Gaussian weights.

\subsection{Results with bottleneck}
\label{sec:results_bottleneck}
\paragraph{Interference predicts input-space attack structure}
\Cref{fig:input-latent-correlation} illustrates our core finding. To misclassify a truck as a car, the adversarial perturbation does not simply increase the car logit -- it also perturbs seemingly unrelated classes (e.g. suppressing horse and amplifying ship), following a pattern predicted by the interference geometry in the classification head. This is not an isolated example: across attacks, the input-space profile $\mathbf{a}$ correlates strongly with the latent profile $\mathbf{r}$ (\cref{tab:interference_correlations_cifar}). This confirms Prediction 1: despite the complex nonlinear pathway from pixels to the classification head, interference geometry governs attack structure.

\paragraph{Tighter bottlenecks amplify the effect}
As bottleneck dimension increases toward the number of classes, the correlation between $\mathbf{a}$ and $\mathbf{r}$ diminishes (\cref{tab:interference_correlations_cifar}). For CIFAR-10, $\bar{\rho}$ drops from $0.89$ at $m = 2$ to $0.16$ at $m = 10$; for CIFAR-100, from $0.61$ at $m = 20$ to $0.34$ at $m = 100$. With reduced interference, attacks align primarily with the source and target class directions, producing minimal collateral activation of other classes. This mirrors our synthetic findings, where reducing superposition pressure narrowed the spread of attack effects. The same trend holds across perturbation budgets (App. \cref{app:tab:correlation-various-epsilons}). This confirms Prediction 2.

\begin{table}[h]
\centering
\small
\caption{\textbf{Correlation between latent and input-space attack profiles in ViTs} ($\varepsilon = 2/255$; additional in App. \cref{app:tab:correlation-various-epsilons}). $\bar{\rho}$: mean correlation between $\mathbf{r}$ and $\mathbf{a}$ (excluding source/target). $m/k=0.2$ corresponds to $m=2$ for CIFAR-10 and $m=20$ for CIFAR-100. Mean $\pm$ std dev over 5 seeds.}
\label{tab:interference_correlations_cifar}
\renewcommand{\arraystretch}{1.15}
\begin{tabular*}{\columnwidth}{@{\extracolsep{\fill}}lcccc@{}}
\toprule
$m/k$ & 0.2 & 0.3 & 0.5 & 1.0 \\
\midrule
CIFAR-10 & $0.89$\pms{0.01} & $0.81$\pms{0.01} & $0.59$\pms{0.02} & $0.16$\pms{0.02} \\
CIFAR-100 & $0.61$\pms{0.01} & $0.48$\pms{0.01} & $0.42$\pms{0.01} & $0.34$\pms{0.01} \\
STL-10 & $0.83$\pms{0.01} & $0.74$\pms{0.02} & $0.54$\pms{0.01} & $0.18$\pms{0.02} \\
Tiny-IN & $0.52$\pms{0.01} & $0.45$\pms{0.01} & $0.40$\pms{0.01} & $0.33$\pms{0.01} \\
\bottomrule
\end{tabular*}
\vspace{-1mm}
\end{table}

\paragraph{Interference patterns are consistent across seeds}
The arrangement of class feature vectors in the bottleneck is not arbitrary. Models trained with different random seeds develop similar interference patterns (\cref{tab:model_metrics}) with cross-seed consistency yielding $z$-scores far exceeding the significance threshold. This consistency suggests that superposition geometry is shaped by structure in the data -- specifically, which classes share visual features -- rather than by optimisation trajectories. Qualitatively, the learned arrangements reflect semantic similarity
(\eg `cat' closer to `dog'). Classes that share visual features develop non-orthogonal representations, creating predictable interference patterns that persist across training runs. This confirms Prediction 3.

\vspace{1mm}
\begin{finding}
Adversarial attacks in vision transformers systematically exploit interference
between superposed features, as predicted by our analysis in the synthetic setting. Classification head interference explains input-space attack structure ($\bar{\rho}$ up to $0.89$), with effect size scaling with superposition pressure. Interference patterns are consistent across seeds and reflect data semantics.
\end{finding}

\subsection{Generalising results without a bottleneck}
\label{sec:results_no_bottleneck}

Previous sections used explicit bottlenecks ($m<k$) as a clean intervention to isolate superposition pressure. 
We now show our mechanism can help explain adversarial vulnerability even when $m \gg k$. NNs trained with gradient-based optimisation converge towards low-rank weight matrices and representations, especially under regularisation \citep{mousavi-hosseiniNeuralNetworksEfficiently2022,fengRankDiminishingDeep2022,huhLowRankSimplicityBias2023}. This means in practice, whether increased interference occurs depends on the \textit{intrinsic dimension} ($\bar{m}$), not the \textit{ambient dimension} ($m$). 

We train ViT models on CIFAR-10 under increasing weight decay, without explicit bottlenecks. We estimate $\bar{m}$ via PCA as the minimum principal components explaining $\tau=0.95$ of variance. Despite an embedding dimension of $384$, regularisation creates an effective bottleneck ($\bar{m} \ll 384$).

Results in \cref{tab:interference_wd} support our hypothesis: the predicted interference effects emerge as the effective dimension $\bar{m}$ decreases. Increasing weight decay decreases $\bar{m}$ and increases $\bar{\rho}$, mirroring the effect of explicit bottlenecks. This suggests that rank-reducing regularisation can amplify the interference-mediated vulnerability identified by our framework. 
This reveals that standard regularisation may inadvertently create adversarial vulnerabilities. These findings position superposition-induced vulnerability as a concern in realistic training scenarios and highlight representation dimensionality as a component of model safety.

\begin{table}[t]
\centering
\small
\caption{\textbf{ViT performance and interference metrics.} \textit{Acc.}: clean test accuracy; $I$: mean absolute interference; $\kappa$: cross-seed interference consistency. Mean $\pm$ std over 5 seeds. ResNet-50 replications in \cref{app:other-arch} achieve higher and more bottleneck-stable clean accuracy while exhibiting the same monotonic trend in $I$, indicating these trends are not driven by accuracy loss at tighter bottlenecks.}
\label{tab:model_metrics}
\setlength{\tabcolsep}{3pt}
\renewcommand{\arraystretch}{1.15}
\begin{tabular}{@{}llcccc@{}}
\toprule
& $m/k$ & 0.2 & 0.3 & 0.5 & 1.0 \\
\midrule
\multirow{3}{*}{CIFAR-10}
& Acc. & $58.7$\pms{1.2} & $79.0$\pms{1.0} & $87.4$\pms{0.5} & $89.3$\pms{0.1} \\
& $I$ & $0.61$\pms{0.1} & $0.43$\pms{0.1} & $0.23$\pms{0.1} & $0.12$\pms{0.0} \\
& $\kappa$ & $0.99$ & $0.95$ & $0.88$ & $0.80$ \\
\midrule
\multirow{3}{*}{CIFAR-100}
& Acc. & $66.3$\pms{0.8} & $66.2$\pms{0.8} & $67.9$\pms{0.7} & $70.2$\pms{0.8} \\
& $I$ & $0.16$\pms{0.0} & $0.12$\pms{0.0} & $0.08$\pms{0.0} & $0.06$\pms{0.0} \\
& $\kappa$ & $0.89$ & $0.90$ & $0.89$ & $0.92$ \\
\midrule
\multirow{3}{*}{STL-10}
& Acc. & $67.1$\pms{0.4} & $82.8$\pms{0.5} & $88.0$\pms{0.3} & $89.0$\pms{0.2} \\
& $I$ & $0.60$\pms{0.00} & $0.46$\pms{0.01} & $0.18$\pms{0.01} & $0.12$\pms{0.00} \\
& $\kappa$ & $0.99$ & $0.96$ & $0.96$ & $0.95$ \\
\midrule
\multirow{3}{*}{Tiny-IN}
& Acc. & $55.5$\pms{0.4} &  $56.5$\pms{0.4} & $57.1$\pms{0.9} & $58.8$\pms{0.4} \\
& $I$ & $0.14$\pms{0.00} & $0.10$\pms{0.00} & $0.07$\pms{0.00} & $0.06$\pms{0.00} \\
& $\kappa$ & $0.90$ & $0.95$ & $0.96$ & $0.99$ \\
\bottomrule
\end{tabular}
\vspace{-1mm}
\end{table}

\begin{table}[h]
\centering
\caption{\textbf{Input class attack correlation across effective dimensions in the ViT for $\epsilon = \sfrac{2}{255}$ on CIFAR-10}. Higher weight decay reduces effective dimension, producing similar effects to explicit bottlenecks. $\bar{\rho}$ is the mean correlation between latent attack profile $\mathbf{r}$ and input-space attack profile $\mathbf{a}$ (excluding source/target).}
\label{tab:interference_wd}
\setlength{\tabcolsep}{6pt}
\renewcommand{\arraystretch}{1.15}
\begin{tabular}{lcccccc}
\toprule
$\lambda$ & 0.1 & 0.3 & 0.6 & 0.7 & 1.5 & 2.0 \\
\midrule
$\bar{m}$ & 29 & 12 & 10 & 9 & 8 & 6 \\
$\bar{\rho}$ & 0.14 & 0.27 & 0.40 & 0.68 & 0.79 & 0.79 \\
\bottomrule
\end{tabular}
\end{table}

\section{Discussion \& related work}\label{sec:related-work}

\paragraph{Explanations for adversarial examples}
The field recognises that no single explanation accounts for all adversarial vulnerability, but rather a confluence of contributing factors. The linear hypothesis attributes vulnerability to model linearity, whereby small perturbations accumulate through layers \citep{Goodfellow+14}. Our work refines this. Linearity enables gradient-based attacks, but vulnerability stems from \emph{which} directions are sensitive, determined by superposition geometry. The non-robust features hypothesis~\citep{Ilyas+19} argues that models exploit predictive but brittle input patterns. In our account, robustness depends instead on how a feature's representation interacts with boundaries: the same feature could be robust if encoded orthogonally to decision-relevant directions, or brittle if it interferes.

Decision boundary analyses characterise vulnerability geometrically. \citet{deepFoolSimpleMoosavi16} show boundaries have small mean curvature near data, allowing linear approximation, so minimal perturbations are approximately normal to the boundary. \citet{dimpledManifoldShamir21} study how non-linear boundary structure creates vulnerability. Our work complements these: non-orthogonal features determine boundary orientations, and the interference pattern specifies which normal directions are achievable with small perturbations. Work on manifold structure~\citep{onVsOffManifoldStutz19} distinguishes on-manifold from off-manifold adversaries. Our framework connects to this: data correlations constrain feature geometry, and perturbations exploiting superposed features will generically be off-manifold as natural data does not activate features in the combinations that interference enables.

Attack transferability is incompletely understood. Prior work attributes limited transfer to representation discrepancies~\citep{aesNotRealFeaturesli2024, improvingTransferAttacksWang24}, and shows that decorrelating features reduces transferability~\citep{wiedeman2022disrupting}. Our framework offers a common mechanism for these findings.

\paragraph{Superposition and representation geometry}
Superposition research identifies feature interference mechanisms: \citet{nandaExplainer2023} distinguish \textit{representational} and \textit{computational} superposition; \citet{NeuronsInHaystackGurnee2023} differentiate \textit{alternating} and \textit{simultaneous} interference. Data correlations shape these arrangements: correlated features become orthogonal~\citep{ToySuperpositionElhage2022}, drive superposition formation~\citep{superpositionNotPolysemanticityChan24}, and induce semantic clustering~\citep{correlationsSuperpositionPrieto+26}. \citet{superpositionNotBugsGortonLewis25} investigate the link between superposition and adversarial robustness, showing that interventions on superposition can control vulnerability. Beyond superposition, representation geometry is shaped by spectral bias~\citep{spectralBiasRahaman19}, neural collapse~\citep{neuralCollapseKothapalli23}, and optimisation objectives~\citep{InterpretabilityAndAdversariesCasper2023}.

\paragraph{Low-rank structure and adversarial vulnerability}
Superposition is tied to low-rank structure: encoding $k$ classes in $m < k$ dimensions yields a classification head $\mathbf{W} \in \mathbb{R}^{k \times m}$ of rank at most $m$. Prior work links low-rank weight matrices to adversarial vulnerability, attributing it to exploding singular values and poor conditioning~\citep{compressibilityRobustnessBarsbey2025,lowRankVulnerabilitySavostianova23}. Our framework is consistent with this: the rank constraint forces class directions to be non-orthogonal, creating the interference that adversaries exploit. However, rank is necessary but not sufficient to determine an attack. Two heads of identical rank can induce entirely different interference patterns, and thus different perturbations. It is the specific feature geometry, not the compression ratio alone, that fixes which directions interfere. Our account refines the low-rank view by identifying \emph{which} of the many rank-$m$ geometries a model realises as the quantity that governs its vulnerability.

\paragraph{Perceptual alignment of gradients (PAG)}
Adversarially robust models develop input gradients aligned with semantically meaningful directions~\citep{tsiprasRobustnessMay2019, PAGSanturkar19, PAGimpliesRobustnessGanz23, PAGInterpSrinivas23}. PAG is a property of \emph{adversarially trained} models, concerning whether \emph{input-space} gradients are perceptually meaningful. Our framework instead operates in activation space, showing that data correlations, low-rank structure, and weight decay enforce semantically meaningful feature geometry whose interference patterns are exploited by attacks.

\section{Concluding remarks} \label{sec:discussion}

We demonstrate that adversarial attacks can exploit interference patterns arising from the geometry of superposed features in NNs. Our experiments, spanning toy models and ViTs, establish that data properties - namely correlations and sparsity - induce distinct superposition geometries, which in turn create predictable adversarial vulnerabilities. We show that these geometric arrangements and the resulting interference allow for the prediction of phenomena such as attack transferability and class-specific susceptibility. Our new perspective frames adversarial vulnerability as a potential, inherent consequence of how networks efficiently encode vast amounts of information via superposition.

\paragraph{Limitations \& future work}
Our insights stem primarily from simplified models to isolate superposition's role, but the vast adversarial robustness literature suggests precise vulnerability mechanisms will likely vary by attack and model. 
Our experiments establish that superposition \emph{suffices} for adversarial vulnerability, not that it is \emph{necessary}. Attacks find whichever weakness requires the smallest perturbation, and superposition may dominate in capacity-constrained settings while being secondary elsewhere. We focus on features with known labels;
extension to intermediate layers will require feature discovery via probes or dictionary learning (\eg SAEs, preliminary results in \cref{app:sae}). 

Future work includes quantifying the `adversarial cost' of superposition, characterising how adversarial training reshapes feature geometry, and extending the analysis to intermediate layers. Understanding how superposition-induced vulnerability interacts with other explanatory factors will clarify when this mechanism dominates and when it is secondary.

\clearpage
\vspace{-1mm}\section*{Acknowledgments}\vspace{-1mm}
L. Prieto was supported by the UKRI Centre for Doctoral Training in Safe and Trusted AI [EP/S0233356/1]. M. Barsbey was supported  by the EPSRC Project GNOMON (EP/X011364/1). T. Birdal acknowledges support from the Engineering and Physical Sciences Research Council [grant EP/X011364/1] and the UKRI Future Leaders Fellowship [grant number MR/Y018818/1]. 

\vspace{-2mm}\section*{Impact statement}\vspace{-1mm}
Due to its investigative nature, our framework is not anticipated to have negative impact or societal consequences. Our experiments use only publicly available datasets and synthetic toy models, with no human subjects or personally identifiable information. The gradient-based attack methods are well-established, and our work introduces no novel attacks but provides theoretical understanding of existing phenomena. By elucidating the relationship between superposition and adversarial vulnerability, we hope to inform research balancing model capability with robustness, contributing to trustworthy AI for critical applications. We declare no conflicts of interest.

\vspace{-2mm}\section*{Reproducibility statement}\vspace{-1mm}
Source code is available at \href{https://github.com/Stevinson/adversarial-mechinterp}{\texttt{https://github.com/\allowbreak Stevinson/adversarial-mechinterp}}. \Cref{sec:toy-models} includes architectural details, training procedures, evaluation metrics, and data generation processes for the synthetic model experiments.
\Cref{sec:image-classifiers} provides architecture and training procedures for image model experiments, with further details in \cref{app:cifar:training-details}. Experiments were conducted on an AMD Ryzen Threadripper 3970X 32-Core with 256GB RAM and RTX 3090. Additional SAE experiments are documented in Appendix \cref{app:sae}.
\vspace{-1mm}

\bibliography{references}
\bibliographystyle{icml2026}

\newpage
\appendix
\crefalias{section}{appendix}
\onecolumn
\section*{Appendix}
\section{Generation of adversarial examples}\label{app:pgd-detail}
We use PGD to generate AEs, which is an iterative method to generate attacks~\citep{madryDeepLearning2018}. For untargeted attacks, it maximises the classifier's loss function $\mathcal{L}(f(\inpadv), y_{\mathrm{true}})$; for targeted, it minimises $\loss(f(\inpadv), y_{\mathrm{target}})$ by the update rule:
\begin{equation}
\inpadv\,^{(k+1)} = \Pi_{\mathcal{S}} \left(\inpadv\,^{(k)} \pm \alpha \grad_k \right)
\end{equation}
($+$ for maximisation, $-$ for minimisation). $\inpadv\,^{(0)}$ is an initial perturbed input, $\alpha$ is step size, $\grad_k$ is the (normalised) gradient $\nabla_{\inp} \loss(f(\inpadv\,^{(k)}), y_{\mathrm{class}})$ ($y_{\mathrm{class}}$ being $y_{\mathrm{true}}$ or $y_{\mathrm{target}}$), and $\Pi_{\mathcal{S}}$ projects onto the $\epsilon$-ball $\mathcal{S} = \{ \inpadv \mid \|\inpadv - \inp\|_{p} \leq \epsilon \}$, with input domain clipping. We use the $\ell_p$ norms:
\begin{enumerate}[leftmargin=*,itemsep=0.2pt,topsep=0em]
    \item $\ell_{\infty}$: Constraint $\|\pert\|_{\ell_{\infty}} = \max_i |\pert_i| \leq \epsilon$. $\grad_k$ is $\mathrm{sign}(\nabla_{\inp} \mathcal{L}(\cdot))$. $\Pi_{\mathcal{S}}$ clips each $\inp'_i$ to $[\inp_i - \epsilon, \inp_i + \epsilon]$.
    \item $\ell_2$: Constraint $\|\pert\|_{\ell_2} = \sqrt{\sum_i \pert_i^2} \leq \epsilon$. $\grad_k$ is $\nabla_{\inp} \mathcal{L}(\cdot) / \|\nabla_{\inp} \mathcal{L}(\cdot)\|_{\ell_2}$. $\Pi_{\mathcal{S}}$ rescales $\pert = \inp' - \inp$ if $\|\pert\|_{\ell_2} > \epsilon$ via $\pert \leftarrow \epsilon \cdot \pert / \|\pert\|_{\ell_2}$.
\end{enumerate}

\section{Omitted proofs \& theoretical results}
\label{app:proofs}
To understand how adversarial attacks exploit representation interference, we analyse how a perturbation in input space is pulled back from a vulnerable direction in the bottleneck representation. Conceptually, the vulnerability comes from latent-space interference. Class feature vectors are represented in a space too small for them to be mutually orthogonal, so moving along one direction changes projections onto others. 

Recall the linear encoder-decoder model from \cref{sec:toy-models}: $\mathbf{h}=\wenc\mathbf{x}\in\mathbb{R}^m$, with $\wenc\in\mathbb{R}^{m\times d}$, and $\mathbf{z}=\wdec\mathbf{h}\in\mathbb{R}^k$, with $\wdec\in\mathbb{R}^{k\times m}$. We write the columns of $\wenc$ as
$\{\mathbf{v}_i\}_{i=1}^d$, so $\mathbf{h}=\sum_i x_i\mathbf{v}_i$.
We write $\mathbf{w}_c^\top$ for row $c$ of $\wdec$, so the class-$c$ logit is $z_c=\mathbf{w}_c^\top\mathbf{h}$. The decoder rows $\mathbf{w}_c$ are class feature vectors -- they read class evidence from the bottleneck.

In the argmax task, the input has block structure
$\mathbf{x}=[\mathbf{x}^{(1)},\ldots,\mathbf{x}^{(k)}]$, with
$\mathbf{x}^{(c)}\in\mathbb{R}^p$ and $d=kp$. The target depends on the class evidence sums $s_c=\sum_{r=1}^p x_r^{(c)}$. Writing the encoder column for coordinate $r$ in class block $c$ as $\mathbf{v}_{c,r}$, we empirically observe that within-block columns learn approximately shared directions, $\mathbf{v}_{c,r}\approx\mathbf{u}_c$, where $\mathbf{u}_c$ is the effective class direction. The decoder row $\mathbf{w}_c^\top$ reads out class-$c$ evidence and, empirically, aligns with the same direction: $\mathbf{w}_c\approx\mathbf{u}_c$. Thus our synthetic model is close to a symmetric superposition setting \cite{ToySuperpositionElhage2022} in that class evidence is written into and read from the bottleneck along the same effective directions.

The pairwise decision boundary between classes $j$ and $k$ is
$$
    \mathcal{B}_{jk}
    =
    \{\mathbf{h}:z_j=z_k\}
    =
    \{\mathbf{h}:(\mathbf{w}_k-\mathbf{w}_j)^\top\mathbf{h}=0\}.
$$
The vector $\mathbf{n}_{jk}=\mathbf{w}_k-\mathbf{w}_j$ is the normal to this pairwise boundary, pointing in the direction that
increases the margin $z_k-z_j$.

\paragraph{Representation-level interference}
We first note the latent-space mechanism. If a perturbation moves the representation by
$\Delta\mathbf{h}\propto\mathbf{w}_k-\mathbf{w}_j$, then the change in the logit of any class $\ell$ is
\[
    \Delta z_\ell
    =
    \mathbf{w}_\ell^\top\Delta\mathbf{h}
    \propto
    \mathbf{w}_\ell^\top(\mathbf{w}_k-\mathbf{w}_j).
\]
Thus the non-target classes affected by an attack, and the signs and
magnitudes of those effects, are determined by the Gram structure of the class
readout directions. In the symmetric setting where
$\mathbf{w}_c\approx\mathbf{u}_c$, this is interference between latent class
feature directions:
\[
    \Delta z_\ell
    \propto
    \mathbf{u}_\ell^\top(\mathbf{u}_k-\mathbf{u}_j).
\]
This is the sense in which superposition creates vulnerability: because the
class directions are non-orthogonal, moving toward one class necessarily
changes projections onto others.

\paragraph{Proposition 1 (Optimal pairwise margin perturbation)}
The optimal input perturbation $\boldsymbol{\delta}$ that maximally increases
the pairwise margin $z_k-z_j$ under the constraint
$\|\boldsymbol{\delta}\|_2\leq\epsilon$ satisfies
\[
    \boldsymbol{\delta}
    \propto
    \wenc^\top\mathbf{n}_{jk},
    \qquad
    \mathbf{n}_{jk}=\mathbf{w}_k-\mathbf{w}_j,
\]
where $\mathbf{n}_{jk}$ is the normal to the pairwise decision boundary
between classes $j$ and $k$.

\begin{proof}
An input perturbation $\boldsymbol{\delta}$ induces latent perturbation
$\Delta\mathbf{h}=\wenc\boldsymbol{\delta}$. Under this perturbation, the
pairwise margin becomes
\begin{align}
    z_k' - z_j'
    &=
    \mathbf{w}_k^\top(\mathbf{h}+\Delta\mathbf{h})
    -
    \mathbf{w}_j^\top(\mathbf{h}+\Delta\mathbf{h}) \\
    &=
    (\mathbf{w}_k-\mathbf{w}_j)^\top\mathbf{h}
    +
    (\mathbf{w}_k-\mathbf{w}_j)^\top\wenc\boldsymbol{\delta}.
\end{align}
The first term is independent of $\boldsymbol{\delta}$, so maximising the
change in margin is equivalent to solving
\[
    \max_{\|\boldsymbol{\delta}\|_2\leq\epsilon}
    \boldsymbol{\delta}^\top\wenc^\top(\mathbf{w}_k-\mathbf{w}_j).
\]
Let $\mathbf{g}_{jk}=\wenc^\top(\mathbf{w}_k-\mathbf{w}_j).$ By Cauchy-Schwarz,
\[
    \boldsymbol{\delta}^\top\mathbf{g}_{jk}
    \leq
    \|\boldsymbol{\delta}\|_2\|\mathbf{g}_{jk}\|_2
    \leq
    \epsilon\|\mathbf{g}_{jk}\|_2.
\]
Equality is achieved when $\boldsymbol{\delta}$ is parallel to
$\mathbf{g}_{jk}$, giving
\[
    \boldsymbol{\delta}^{\star}
    =
    \epsilon
    \frac{
        \wenc^\top(\mathbf{w}_k-\mathbf{w}_j)
    }{
        \left\|
        \wenc^\top(\mathbf{w}_k-\mathbf{w}_j)
        \right\|_2
    },
\]
\end{proof}

Intuitively, the attack identifies the vulnerable latent direction that increases the target readout relative to the source readout. The encoder transpose maps this latent direction back into input space. Thus superposition determines the vulnerable latent direction, while the encoder
determines which input perturbation realises it.

\paragraph{Corollary 1 (Interference drives vulnerability).}
For input coordinate $i$, the adversarial perturbation magnitude satisfies
\[
    |\delta_i|
    \propto
    \left|
    \mathbf{v}_i^\top(\mathbf{w}_k-\mathbf{w}_j)
    \right|,
\]
where $\mathbf{v}_i$ is the encoder direction for input coordinate $i$.

For coordinate $r$ in class block $c$, this gives $|\delta_{c,r}| \propto \left|\mathbf{v}_{c,r}^\top(\mathbf{w}_k-\mathbf{w}_j) \right|$. Since $\mathbf{v}_{c,r}\approx\mathbf{u}_c$ and
$\mathbf{u}_c\approx\mathbf{w}_c$ in our synthetic setting, this is
equivalently
\[
    |\delta_{c,r}|
    \propto
    \left|
    \mathbf{w}_c^\top(\mathbf{w}_k-\mathbf{w}_j)
    \right|.
\]
Thus, in the synthetic setting, the input perturbation profile is determined by interference between class feature vectors. Coordinates whose class readout aligns with $\mathbf{w}_k-\mathbf{w}_j$ are increased; coordinates whose class readout anti-aligns are suppressed; and the magnitude scales with the strength of this alignment.

\paragraph{Remark on multiclass targeted attacks.}
Proposition~1 characterises the perturbation that maximally increases the pairwise margin $z_k-z_j$. In a multiclass classifier, a targeted adversarial example for class $k$ must also make $z_k$ exceed all other logits. Therefore, if another class lies between $j$ and $k$, a successful targeted attack may be governed by additional pairwise boundaries.

\paragraph{Proposition 2 (Shared geometry implies shared attacks)} 
Models with encoder directions and class readout vectors related by the same orthogonal transformation $\mathbf{Q}$, where $\mathbf{Q}^\top\mathbf{Q}=\mathbf{I}$, share identical optimal attack directions in input space.

\begin{proof}
Suppose two models have
\[
    \wenc'=\mathbf{Q}\wenc,
    \qquad
    \mathbf{w}_c'=\mathbf{Q}\mathbf{w}_c
    \quad
    \text{for all classes } c.
\]
Then the optimal attack direction for the second model is
\begin{align}
    (\wenc')^\top(\mathbf{w}_k'-\mathbf{w}_j')
    &=
    (\mathbf{Q}\wenc)^\top
    \mathbf{Q}(\mathbf{w}_k-\mathbf{w}_j) \\
    &=
    \wenc^\top\mathbf{Q}^\top\mathbf{Q}
    (\mathbf{w}_k-\mathbf{w}_j) \\
    &=
    \wenc^\top(\mathbf{w}_k-\mathbf{w}_j).
\end{align}
Thus the two models have the same input-space interference vector and therefore the same optimal perturbation direction.
\end{proof}

Intuitively, rank or PCA determines the subspace in which attacks can act, but not the arrangement of meaningful feature directions inside that subspace. Superposition geometry is this internal arrangement, captured by the relevant Gram structure of class feature vectors. If two models learn the same geometry up to an orthogonal transformation, their inner products are preserved, so the perturbation that creates constructive interference in one model creates the same constructive interference in the other. This provides a
mechanism for transferability.

\section{Synthetic argmax task}

\subsection{Hypotheses testing framework} \label{app:toy:hypotheses}

We explicitly state our hypotheses for the three research questions in \cref{sec:toy-models}.

\textit{Research Q1}: Do adversarial perturbations exploit superposition geometry?
\begin{itemize}[leftmargin=*,itemsep=0.2pt,topsep=0em]
    \item $H_0$: Adversarial perturbations are random with respect to feature geometry.
    \item $H_1$: Adversarial perturbations systematically exploit geometric relationships between superposed representations.
\end{itemize}

\textit{Research Q2}: Do data correlations determine superposition geometry?
\begin{itemize}[leftmargin=*,itemsep=0.2pt,topsep=0em]
    \item $H_0$: Input correlations have no systematic effect on learned geometries.
    \item $H_1$: Input correlations determine geometric arrangements across model initializations.
\end{itemize}

\textit{Research Q3}: Does shared geometry explain attack transferability?
\begin{itemize}[leftmargin=*,itemsep=0.2pt,topsep=0em]
    \item $H_0$: Attack transferability is independent of geometric similarity.
    \item $H_1$: Transferability increases with shared latent structure.
\end{itemize}

We test these hypotheses through controlled experiments:
\begin{itemize}[leftmargin=*,itemsep=0.2pt,topsep=0em]
    \item $H_1(1)$: We measure the input perturbation profile alignment with a class's latent representation and the latent attack vector.
    \item $H_1(2)$: We systematically vary input correlations and measure resulting geometries.
    \item $H_1(3)$: We quantify transferability rates across models with varying geometric similarity.
\end{itemize}

\subsection{Non-linear toy model}\label{app:nonlinear}

To verify that the interference-driven attack mechanism generalises beyond the strictly linear bottleneck used in the main text, we test a non-linear variant of the toy classifier that places a ReLU activation and bias on the decoder, following the architectural convention of \citet{ToySuperpositionElhage2022}:

\begin{align*}
  \mathbf{h} &= \wenc \mathbf{x}, \\
  \mathbf{z} &= \mathrm{ReLU}\!\left(\wdec \mathbf{h} +
\mathbf{b}_{\mathrm{dec}}\right).
\end{align*}

The bottleneck $\mathbf{h}$ remains linear, so the class feature vectors $\mathbf{v}_c$ retain the same geometric interpretation as in the main text, and Proposition~\ref{prop:optimal} continues to specify the optimal perturbation.

\begin{wraptable}{r}{0.64\linewidth}
\centering
\caption{Alignment between PGD and optimal perturbations for the non-linear variant ($\varepsilon = 0.2$). We report cosine similarity (mean $\pm$ std across 5 seeds) of the per-seed mean. All are highly significant (paired $t$-test comparing PGD vs. random direction across seeds).}
\label{tab:nonlinear_alignment}
\begin{tabular}{ccccc}
\toprule
$k$ & $m$ & Accuracy & Sim.\ (PGD vs.\ Theory) & Sim.\ (Random) \\
\midrule
30 & 10 & $0.94 \pm 0.04$ & $0.96 \pm 0.00$ & $0.00 \pm 0.00$ \\
90 & 30 & $0.97 \pm 0.00$ & $0.92 \pm 0.00$ & $0.00 \pm 0.00$ \\
\bottomrule
\end{tabular}
\end{wraptable}
\textbf{Setup}. We use the same task, training procedure, and attack methodology as the main text (untargeted $\ell_2$-PGD at $\epsilon = 0.2$). We do not report the low dimensional $k=6$ result due to the ReLU exhibiting a \emph{dead-class} initialisation pathology (nearly every sample has a single active input feature so each class's pre-logit sign is determined by initialisation, and leading to dead ReLUs that never recover).

\textbf{Results.} \Cref{tab:nonlinear_alignment} reports per-seed cosine alignment between PGD-discovered perturbations and the Proposition~\ref{prop:optimal} direction, using the same per-seed aggregation and paired $t$-test against a random-direction baseline as the main Table~\ref{tab:pgd_theory_alignment}. Alignment with the linear theoretical prediction matches the main-text linear result.

\begin{wraptable}{r}{0.5\linewidth}
\centering
\vspace{-1cm}
\caption{Geometric similarity results across correlation types and superposition configurations.}
\label{toy:tab:geometric_similarity}
\begin{tabular}{cccc}
\toprule
Correlation Type & $k$ & $m$ & Geometric Similarity \\
\midrule
\multirow{3}{*}{Uncorrelated} & 6 & 2 & $0.18 \pm 0.15$ \\
 & 30 & 10 & $0.17 \pm 0.02$ \\
 & 90 & 30 & $0.17 \pm 0.01$ \\
\midrule
\multirow{3}{*}{Paired} & 6 & 2 & $0.47 \pm 0.07$ \\
 & 30 & 10 & $0.26 \pm 0.07$ \\
 & 90 & 30 & $0.25 \pm 0.01$ \\
\midrule
\multirow{3}{*}{Global} & 6 & 2 & $0.92 \pm 0.04$ \\
 & 30 & 10 & $0.88 \pm 0.01$ \\
 & 90 & 30 & $0.80 \pm 0.01$ \\
\bottomrule
\end{tabular}
\end{wraptable}
\subsection{Correlations constrain arrangements} \label{app:toy:correlations}
\Cref{toy:tab:geometric_similarity} extends our analysis of how input correlations shape feature geometry (\Cref{sec:toy:correlations}) to a wider range of class counts ($k$) and hidden dimensions ($m$).
For a model with $k$ class feature vectors, let $\mathbf{f}^{(i)} \in \mathbb{R}^{k(k-1)/2}$ be the upper-triangular off-diagonal entries of its $k \times k$ class-vector cosine similarity matrix. For each correlation regime we train $N=10$ models and report the mean $\pm$ std dev of the Pearson correlation $\rho_{ij} = \mathrm{Pearson}(\mathbf{f}^{(i)}, \mathbf{f}^{(j)})$ over all $\binom{N}{2} = 45$ unordered seed pairs. Cosines are rotation-invariant, so $\rho_{ij}$ measures whether the pairwise relationships between specific class \emph{labels} agree across models. Pairwise comparisons across regimes use a two-sample $t$-test with Bonferroni correction.

\begin{wraptable}{r}{0.4\linewidth}
\vspace{-0.9cm}
\centering
\small
\caption{Robust accuracy (\%) at $\varepsilon=0.2$ as a function of superposition
pressure ($k/m$) and bottleneck width ($m$). Reading \emph{down} a column raises
superposition pressure at fixed capacity; reading \emph{across} a row raises
capacity at fixed pressure.}
\label{tab:app:correlation_results}
\begin{tabular}{@{}ccc@{}}
\toprule
& \multicolumn{2}{c}{\textbf{Robust Acc. (\%)}} \\
\cmidrule(lr){2-3}
$k/m$ & $m=2$ & $m=5$ \\
\midrule
1 & 98.7 & 92.0 \\
2 & 75.8 & 87.2 \\
3 & 31.0 & 59.0 \\
4 & 11.1 & 33.5 \\
\bottomrule
\end{tabular}
\end{wraptable}
\subsection{Separating superposition effects from capacity reduction}
To disentangle superposition pressure from capacity, we vary the number of classes $k$ across two bottleneck widths $m\in{2,5}$, comparing settings at matched superposition pressure $(k/m)$ but different capacity. We report robust accuracy in \cref{tab:app:correlation_results}. Reading down each column, increasing superposition pressure $(k/m)$ at fixed capacity reduces robust accuracy; reading across each row, increasing capacity at fixed $(k/m)$ improves it. These controls demonstrate that superposition pressure $(k/m)$ is the driver of the adversarial vulnerabilities we observe, with greater capacity affording additional robustness at matched pressure.

\subsection{Rank alone does not determine attack structure}
\label{app:rank-vs-superposition}

A low-rank account of adversarial vulnerability identifies the admissible feature subspace but not its internal geometry. Our superposition account predicts adversarial perturbations in our toy model as $\delta_i \propto \mathbf{v}_i^\top (\mathbf{v}_k - \mathbf{v}_j)$ (\Cref{prop:optimal}), which depends on the specific pairwise overlaps $\mathbf{v}_i^\top \mathbf{v}_j$ -- the Gram structure inside the subspace -- and is not invariant to rotations within it. As a result, two models sharing rank, latent dimension, and principal subspace can still exhibit different attack structures.

To make this concrete, consider two three-class models with $\mathbf{v}_i \in \mathbb{R}^2$.

\paragraph{Model A}
\[
\mathbf{v}_1 = [1, 0]^\top, \quad \mathbf{v}_2 = [0, 1]^\top, \quad \mathbf{v}_3 = \tfrac{1}{\sqrt{2}} [1, 1]^\top.
\]
For an attack from class 1 to class 2,
\[
\delta_3 \propto \mathbf{v}_3^\top (\mathbf{v}_2 - \mathbf{v}_1) = \tfrac{1}{\sqrt{2}} (1, 1) \cdot (-1, 1) = 0,
\]
so class 3 does not participate.

\paragraph{Model B (same subspace, different arrangement)}
\[
\mathbf{v}_1 = [1, 0]^\top, \quad \mathbf{v}_2 = [0, 1]^\top, \quad \mathbf{v}_3 = \tfrac{1}{\sqrt{2}} [1, -1]^\top.
\]
Now
\[
\delta_3 \propto \mathbf{v}_3^\top (\mathbf{v}_2 - \mathbf{v}_1) = \tfrac{1}{\sqrt{2}} (1, -1) \cdot (-1, 1) = -\sqrt{2},
\]
so class 3 participates strongly.

The two models share the same principal subspace and the same rank, but no orthogonal matrix $Q$ satisfies $\mathbf{v}'_i = Q \mathbf{v}_i$ for all $i$. The rank-constrained account therefore cannot distinguish them, yet they produce demonstrably different attacks. This complements \Cref{prop:transfer}, which establishes that orthogonal transformations \emph{do} preserve optimal attacks. Our account can be viewed as a refinement of the low-rank account, with rank determining which subspaces are admissible, whilst the specific feature arrangement within that subspace determines which perturbations succeed.

\subsection{Additional examples of adversarial examples}

\Cref{fig:adversarial-attack} demonstrates how adversarial attacks exploit the interference between latent features in superposition. Here we provide further visual examples (\cref{fig:app:all-adversarial-attacks}) to reinforce intuition. Specifically, we supplement the main text by showcasing:
\begin{itemize} [noitemsep,topsep=0em,leftmargin=*]
    \item \Cref{fig:app:adversarial-attack-eg1} shows an additional instance of the setup in \cref{sec:toy:do-attacks-exploit-interference} ($m=2$, $k=7$) with an $\ell_2$-norm PGD attack, demonstrating the IPP and latent space manipulations that lead to misclassification.
    \item \Cref{fig:app:adversarial-attack-eg2} shows a similar setup ($m=2$, $k=7$) using an $\ell_{\infty}$-norm PGD attack on a less sparse input.
\end{itemize}
\begin{figure}[h]
    \centering
    \begin{subfigure}{\linewidth}
        \centering
        \includegraphics[width=\linewidth]{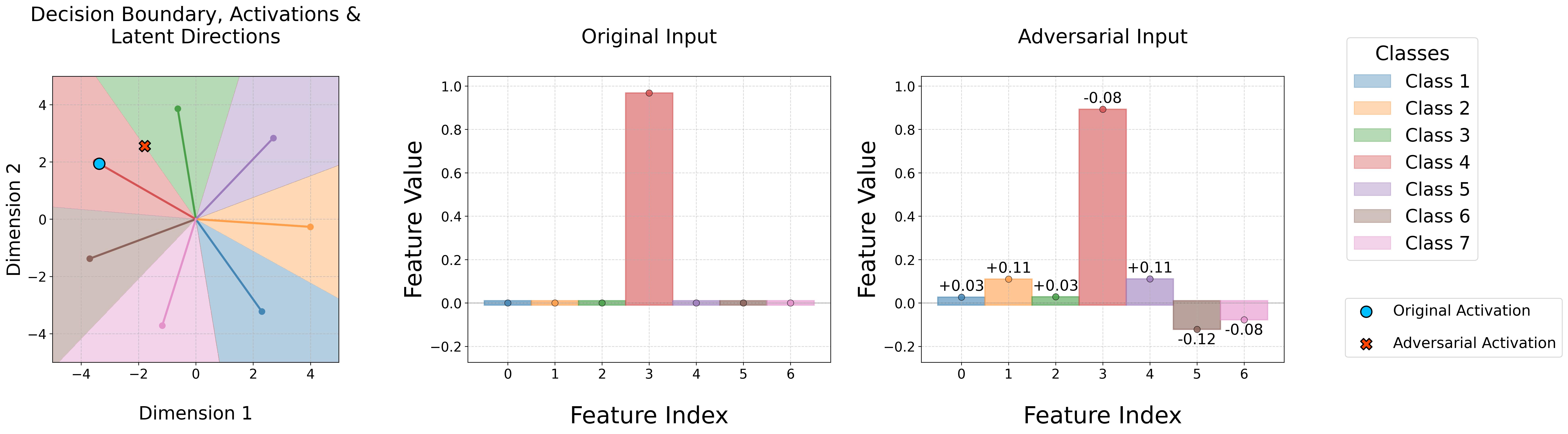}
        \caption{An $\ell_2$-norm attack changing the classification of an input of class 4 to class 3. The left plot shows original and adversarial activations in latent space, along with representation directions. The right plots show original and perturbed input feature values.}
        \label{fig:app:adversarial-attack-eg1}
    \end{subfigure}
    
    \vspace{0.5cm} %
    
    \begin{subfigure}{\linewidth}
        \centering
        \includegraphics[width=\linewidth]{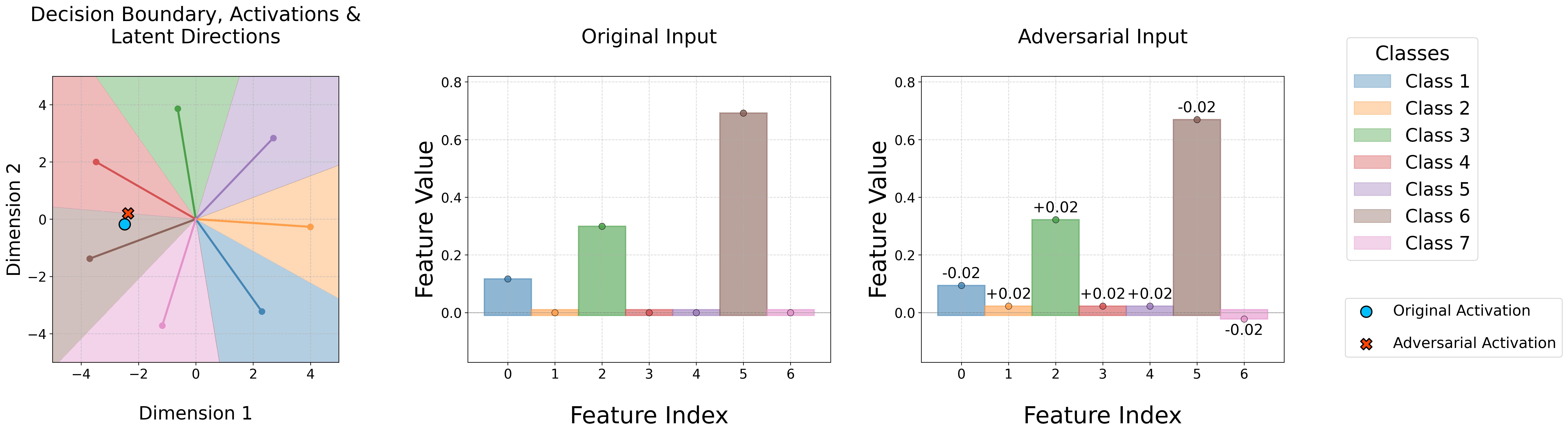}
        \caption{An $\ell_{\infty}$-norm attack changing the classification of an input of class 6 to class 4. The left plot shows original and adversarial activations in latent space relative to class latent directions. The right plots show original and perturbed input feature values.}
        \label{fig:app:adversarial-attack-eg2}
    \end{subfigure}
    
    \caption{Visualisations of AEs in the toy model, supplementing Figure 1 from the main paper by illustrating attack mechanisms in activation space and input space under varied conditions.}
    \label{fig:app:all-adversarial-attacks}
\end{figure}

\begin{wraptable}{r}{0.6\linewidth}
\centering
\vspace{-1.2cm}
\caption{\textbf{Alignment across attack families.} Cosine similarity between the attack perturbation and the theoretical optimum from \Cref{prop:optimal} ($\epsilon = 0.1$). Mean $\pm$ std over $N$ attacks per configuration.}
\label{tab:app:attack-families}
\begin{tabular}{llccc}
\toprule
Attack & Type & $(6,2)$ & $(30,10)$ & $(90,30)$ \\
\midrule
PGD-$\ell_2$ & 1st-order gradient & $0.93 \pm 0.01$ & $0.95 \pm 0.01$ & $0.97 \pm 0.00$ \\
SPSA & Zeroth-order & $0.92 \pm 0.01$ & $0.86 \pm 0.00$ & $0.79 \pm 0.00$ \\
Square & Score-based & $0.99 \pm 0.00$ & $0.84 \pm 0.00$ & $0.55 \pm 0.00$ \\
\bottomrule
\vspace{-1.0cm}
\end{tabular}
\end{wraptable}
\subsection{Generalisation across attack families}
\label{app:attack-families}
We focus on first-order gradient-based attacks throughout the main text, as they are the most widely used and practically relevant attacks in the literature. \Cref{prop:optimal}, however, derives the optimal perturbation from interference geometry independent of the attack algorithm, suggesting the mechanism may apply more broadly.

As an initial test, we replicate the attack-alignment experiment of \Cref{sec:toy:do-attacks-exploit-interference} using two non-gradient-based families: SPSA \citep{weakAttacksUesato+18}, a zeroth-order method that estimates gradients by finite differences, and Square Attack \citep{squareAttackAndriushchenko+20}, a score-based gradient-free method. \Cref{tab:app:attack-families} reports cosine similarity to the theoretical optimum across all three attack families. We take this as preliminary evidence that the interference-based vulnerability structure we identify may not be specific to first-order attacks.

\subsection{Does our proposed mechanism persist at scale?}
While empirical evidence suggests that larger models tend to be more robust to adversarial attacks, this effect is weak \citep{scalingTrendsRobustness25}. When adversarial training is employed, clearer scaling trends emerge, but improvements remain largely specific to the attack type used during training rather than conferring general robustness.

Regarding superposition, we note a tension at play when scaling up models. Larger models have more capacity to represent concepts, but in general tasks such as next-token prediction over internet text there is a long tail of useful concepts to capture, so the number of features models must encode grows alongside their capacity. Despite this increased capacity, superposition appears to be prevalent even in frontier models~\citep{biologyLLMsLindsey+25}. Supporting evidence comes from dictionary learning methods. SAEs require increasingly large dictionaries for larger models~\citep{scalingSAEsGao25}, suggesting that the number of features scales with model size. Hence, the fundamental tension driving superposition, that models must compress many features into limited dimensions, does not disappear with scale. Moreover, our own experiments (\Cref{sec:results_no_bottleneck}) show superposition emerging under standard weight-decay regularisation even without an explicit capacity bottleneck, and \citet{correlationsSuperpositionPrieto+26} argue that the resulting interference is a useful mechanism, not one that needs mitigation.

Since superposition persists in large-scale models, we conjecture that our insights remain relevant across model scales. Understanding how the superposition geometry changes with scale and with other realistic network components such as RMS norm, and whether the interference-based mechanism we identify governs adversarial perturbations in frontier networks, is interesting future work.

\section{Image classifier experiments}\label{appendix:cifar}
To assess whether the principles identified in the toy models extend to realistic settings, \cref{sec:image-classifiers} reports experiments on four image datasets (CIFAR-10/100, STL-10, Tiny ImageNet) and three backbone architectures (ViT, ResNet-50, MLP-Mixer) with an engineered bottleneck head. This appendix gives the full architecture and training details (\cref{app:cifar:training-details}) and presents extended results. We examine what shapes the learned superposition geometry in the vision setting, verify the trends hold across additional perturbation budgets, replicate on ResNet-50 and MLP-Mixer (\cref{app:other-arch}), and test reduced capacity as a confounder (\cref{app:capacity-control}).

\subsection{Architecture \& training information} \label{app:cifar:training-details}
We evaluate across four datasets and three backbone architectures. Unless stated otherwise, all configurations share the bottleneck head, training recipe, and augmentation pipeline described below.

\paragraph{Datasets} We use CIFAR-10 and CIFAR-100~\citep{cifar10Krizhevsky09} ($32 \times 32$, $k = 10$ and $k = 100$), STL-10~\citep{STLDatasetCoates11} ($96 \times 96$, $k = 10$), and Tiny ImageNet~\citep{tinyImageNetLeYang15} ($64 \times 64$, $k = 200$). For each dataset we hold out a stratified $10\%$ of the training set as a validation split.

\paragraph{Backbones} The base ViT~\citep{dosovitskiy2020image} comprised 12 transformer layers, embedding dimension $d = 384$, 6 attention heads per layer, and a per-block MLP hidden dimension of 1536, with learned positional embeddings and a CLS token. For CIFAR ($32 \times 32$) it used $4 \times 4$ patches; for Tiny ImageNet ($64 \times 64$) it used $8 \times 8$ patches, giving 64 patches in both cases. The ResNet-50~\citep{heResnet2016} backbone followed the standard torchvision architecture trained from scratch, with the CIFAR-style stem (the $7 \times 7$ stride-2 stem convolution replaced by a $3 \times 3$ stride-1 convolution and the initial max-pool removed) to preserve spatial resolution; global average pooling yields a $2048$-dimensional feature. We use ResNet-50 for STL-10, whose $5{,}000$ labelled images are too few to train a $\sim$30M-parameter ViT from scratch. The MLP-Mixer~\citep{mlpMixerTolstikhin21} backbone (used on CIFAR) had hidden dimension 512, depth 8, $4 \times 4$ patches (64 tokens), token-mixing MLP dimension 256, and channel-mixing MLP dimension 2048, with mean pooling over tokens.

\paragraph{Bottleneck} The bottleneck head consisted of a linear encoder projecting the pooled backbone features into an $m$-dimensional latent space, followed by a linear decoder mapping to the $k$ class logits. We use $m \in \{2, 3, 5, 10\}$ for CIFAR-10 and STL-10, $m \in \{20, 30, 50, 100\}$ for CIFAR-100, and $m \in \{40, 60, 100, 200\}$ for Tiny ImageNet.

\paragraph{Training} The backbone and bottleneck were trained jointly end-to-end for up to 300 epochs with early stopping (patience 30--50). Training used the AdamW optimiser with learning rate $0.001$ and weight decay $0.1$, a batch size of 128, and a schedule of 10 epochs of linear warmup followed by cosine annealing to $10^{-6}$. The loss was cross-entropy with label smoothing $0.1$. Dropout ($0.1$), Mixup ($\alpha = 0.8$), and CutMix ($\alpha = 1.0$) were used during training, and each configuration was repeated over five random seeds. For preprocessing, images were augmented with \texttt{RandomCrop} (padding 4 for CIFAR, 8 for Tiny ImageNet, 12 for STL-10), \texttt{RandomHorizontalFlip}, and RandAugment (2 operations, magnitude 9), then normalised to the per-dataset mean and standard deviation.

\begin{wraptable}{r}{0.45\linewidth}
\vspace{-1.2cm}
\centering
\small
\caption{\textbf{Input class attack correlation across bottleneck dimensions and perturbation budgets}. $\bar{\rho}$ is the mean correlation between latent attack profile $\mathbf{r}$ and input-space attack profile $\mathbf{a}$ (excluding source/target).}
\label{app:tab:correlation-various-epsilons}
\setlength{\tabcolsep}{8pt}
\renewcommand{\arraystretch}{1.15}
\begin{tabular}{@{}ccc@{\hspace{2.5em}}ccc@{}}
\toprule
\multicolumn{3}{c}{CIFAR-10} & \multicolumn{3}{c}{CIFAR-100} \\
\cmidrule(lr){1-3} \cmidrule(lr){4-6}
$\epsilon$ & $m$ & $\bar{\rho}$ & $\epsilon$ & $m$ & $\bar{\rho}$ \\
\midrule
\multirow{4}{*}{$\sfrac{2}{255}$} & 2  & 0.89 & \multirow{4}{*}{$\sfrac{2}{255}$} & 20  & 0.61 \\
                                 & 3  & 0.81 &                                  & 30  & 0.48 \\
                                 & 5  & 0.59 &                                  & 50  & 0.42 \\
                                 & 10 & 0.16 &                                  & 100 & 0.34 \\
\cmidrule(lr){1-3} \cmidrule(lr){4-6}
\multirow{4}{*}{$\sfrac{4}{255}$} & 2  & 0.80 & \multirow{4}{*}{$\sfrac{4}{255}$} & 20  & 0.53 \\
                                 & 3  & 0.67 &                                  & 30  & 0.44 \\
                                 & 5  & 0.48 &                                  & 50  & 0.43 \\
                                 & 10 & 0.25 &                                  & 100 & 0.37 \\
\cmidrule(lr){1-3} \cmidrule(lr){4-6}
\multirow{4}{*}{$\sfrac{8}{255}$} & 2  & 0.78 & \multirow{4}{*}{$\sfrac{8}{255}$} & 20  & 0.49 \\
                                 & 3  & 0.64 &                                  & 30  & 0.39 \\
                                 & 5  & 0.44 &                                  & 50  & 0.34 \\
                                 & 10 & 0.30 &                                  & 100 & 0.29 \\
\bottomrule
\end{tabular}
\vspace{-2cm}
\end{wraptable}
\subsection{Additional perturbation budgets}
\Cref{app:tab:correlation-various-epsilons} extends the main paper analysis (\cref{tab:interference_correlations_cifar}) by evaluating the correlation between latent and input-space attack profiles across three perturbation budgets ($\epsilon \in \{2/255, 4/255, 8/255\}$), demonstrating that the relationship between bottleneck dimension and attack structure holds consistently across different attack strengths.

\subsection{Vulnerability is not reduced capacity}
\label{app:capacity-control}

An alternative explanation for our results may be that compression simply reduces model capacity, and lower-capacity models are more fragile in general. We test against this by comparing adversarial robustness against robustness to common corruptions~\citep{cifarCorruptionsHendrycks19} as the bottleneck varies (\cref{tab:corruption_vs_adversarial}). We evaluate ViTs on CIFAR-10-C, which applies 15 common corruptions (e.g.\ blur, noise, weather effects) at varying severities.

\begin{wraptable}{r}{0.44\linewidth}
\vspace{0.7cm}
\centering
\caption{\textbf{Adversarial vs.\ corruption robustness} (ViT, CIFAR-10 / CIFAR-10-C, $\ell_\infty$ PGD at $\epsilon = \sfrac{2}{255}$). Corruption accuracy tracks clean accuracy as the bottleneck varies, whereas adversarial accuracy stays uniformly low, indicating vulnerability is not a generic consequence of reduced capacity. Mean $\pm$ std over 5 seeds.}
\label{tab:corruption_vs_adversarial}
\setlength{\tabcolsep}{6pt}
\renewcommand{\arraystretch}{1.15}
\begin{tabular}{lccc}
\toprule
$m/k$ & Clean Acc. & Adv. Acc. & Corrup. Acc. \\
\midrule
0.2 & $57.6 \pm 2.2$ & $11.5 \pm 0.7$ & $50.2 \pm 2.0$ \\
0.3 & $78.2 \pm 1.4$ & $12.9 \pm 0.7$ & $66.5 \pm 1.2$ \\
0.5 & $87.5 \pm 0.5$ & $13.4 \pm 0.7$ & $74.8 \pm 0.7$ \\
1.0 & $89.1 \pm 0.9$ & $13.8 \pm 0.8$ & $77.2 \pm 0.6$ \\
\bottomrule
\end{tabular}
\vspace{-2mm}
\end{wraptable}
Corruption robustness tracks clean accuracy: as the bottleneck tightens and clean accuracy falls, corruption accuracy falls with it (from $77.2\%$ at $m/k = 1.0$ to $50.2\%$ at $m/k = 0.2$). Adversarial robustness shows no such dependence -- it remains uniformly low across all bottleneck ratios, including the widest. Were vulnerability a generic capacity effect, both measures would degrade together; instead only adversarial robustness is decoupled from capacity, consistent with a targeted geometric mechanism rather than a loss of representational headroom.

\subsection{Replication across architectures}
\label{app:other-arch}

Interference is not specific to the ViT architecture. Here we give the results for ResNet-50~\citep{heResnet2016} and MLP-Mixer~\citep{mlpMixerTolstikhin21}, which differ substantially from ViTs in their inductive biases. For each we vary the bottleneck ratio $m/k$ and report clean accuracy, the mean latent--input profile correlation $\bar{\rho}$, and the interference magnitude $I$ (\cref{sec:measuring_attack}). In both architectures, decreasing $m/k$ increases $\bar{\rho}$ and $I$ monotonically, mirroring the ViT results in \cref{tab:interference_correlations_cifar,tab:model_metrics}.

\begin{table}[h]
\centering
\begin{minipage}[b]{0.48\textwidth}
\centering
\scriptsize
\caption{\textbf{ResNet-50} results. $\bar{\rho}$: mean latent--input profile correlation; $I$: mean absolute interference. Mean $\pm$ std over 5 seeds.}
\label{app:tab:resnet}
\setlength{\tabcolsep}{6pt}
\renewcommand{\arraystretch}{1.15}
\begin{tabular}{llccc}
\toprule
Dataset & $m/k$ & Acc. & $\bar{\rho}$ & $I$ \\
\midrule
\multirow{4}{*}{CIFAR-10}
& 0.2 & $94.5 \pm 0.9$ & $0.80 \pm 0.02$ & $0.60 \pm 0.00$ \\
& 0.3 & $96.6 \pm 0.2$ & $0.69 \pm 0.01$ & $0.47 \pm 0.00$ \\
& 0.5 & $96.3 \pm 0.6$ & $0.46 \pm 0.02$ & $0.26 \pm 0.01$ \\
& 1.0 & $97.2 \pm 0.1$ & $0.33 \pm 0.02$ & $0.12 \pm 0.00$ \\
\midrule
\multirow{4}{*}{CIFAR-100}
& 0.2 & $83.1 \pm 0.2$ & $0.38 \pm 0.01$ & $0.16 \pm 0.00$ \\
& 0.3 & $83.2 \pm 0.1$ & $0.31 \pm 0.01$ & $0.12 \pm 0.00$ \\
& 0.5 & $83.1 \pm 0.3$ & $0.22 \pm 0.01$ & $0.08 \pm 0.00$ \\
& 1.0 & $83.2 \pm 0.2$ & $0.15 \pm 0.01$ & $0.05 \pm 0.00$ \\
\bottomrule
\end{tabular}
\end{minipage}
\hfill
\begin{minipage}[b]{0.48\textwidth}
\centering
\scriptsize
\caption{\textbf{MLP-Mixer} results. $\bar{\rho}$: mean latent--input profile correlation; $I$: mean absolute interference. Mean $\pm$ std over 5 seeds.}
\label{app:tab:mixer}
\setlength{\tabcolsep}{6pt}
\renewcommand{\arraystretch}{1.15}
\begin{tabular}{llccc}
\toprule
Dataset & $m/k$ & Acc. & $\bar{\rho}$ & $I$ \\
\midrule
\multirow{4}{*}{CIFAR-10}
& 0.2 & $82.7 \pm 0.9$ & $0.70 \pm 0.04$ & $0.61 \pm 0.00$ \\
& 0.3 & $89.8 \pm 0.2$ & $0.49 \pm 0.04$ & $0.46 \pm 0.00$ \\
& 0.5 & $90.9 \pm 0.4$ & $0.33 \pm 0.04$ & $0.22 \pm 0.01$ \\
& 1.0 & $91.5 \pm 0.3$ & $0.15 \pm 0.03$ & $0.12 \pm 0.00$ \\
\midrule
\multirow{4}{*}{CIFAR-100}
& 0.2 & $66.7 \pm 1.1$ & $0.50 \pm 0.01$ & $0.16 \pm 0.00$ \\
& 0.3 & $62.5 \pm 2.5$ & $0.42 \pm 0.08$ & $0.12 \pm 0.00$ \\
& 0.5 & $67.7 \pm 0.4$ & $0.25 \pm 0.02$ & $0.08 \pm 0.00$ \\
& 1.0 & $68.1 \pm 0.8$ & $0.19 \pm 0.04$ & $0.04 \pm 0.00$ \\
\bottomrule
\end{tabular}
\end{minipage}
\end{table}

\subsection{Correlations in class features}
In \cref{sec:toy-models} it was correlations between inputs that drove superposition arrangements. We note that in the vision classifiers it is not the correlations between input classes but rather the correlations in the representations at this point in the network that drive these arrangements. At the classification layer this is likely similar to the confusion matrix – \ie how each class is misclassified in relation to the other classes. Classes that cluster together are those the network finds inherently similar and misclassifies together. To test this we repeat the experiment, finetuning the base ViT using timm/vit\_base\_patch16\_384 which has been trained on ImageNet-21k (14 million images, 21,843 classes) and ImageNet (1 million images, 1,000 classes). After fine-tuning on CIFAR-10 and applying the same bottleneck training procedure as \cref{sec:image-classifiers}, the resulting geometry shows random ordering between initialisations with approximately equal spacing between features (\ie neural collapse~\cite{neuralCollapseKothapalli23}). In this case performance is near 100\%, meaning the confusion matrix is the identity, and the superposition loses its structure. In contrast, models trained solely on CIFAR-10 converge to similar geometries because they share the same learned difficulty structure - the same pairs of classes prove challenging to distinguish, leading to consistent superposition patterns.

\section{Inspecting adversarial perturbations via SAE features} \label{app:sae}

The analyses of the main paper focused on scenarios with either explicitly defined class features (toy model) or class-level representations in a bottleneck (ViT experiments). However, in large, practical networks, interference will occur between unlabelled latent features across all layers. As mentioned in the future work section, SAEs offer a promising unsupervised method to extract more linear features from NNs~\citep{TowardsMonosemanticityBricken2023}. This appendix section presents a preliminary investigation into using SAE-extracted features to characterise how a large scale ViT responds internally when processing AEs versus clean inputs. These initial results are intended to explore the feasibility of this approach for future work aimed at understanding adversarial phenomena on a feature level within large models, rather than presenting a conclusive new set of findings. We first outline the experimental setup and validate the SAEs behaviour on adversarial inputs before presenting exploratory layer-wise feature difference analyses.

\subsection{Experiment setup}
We analyse SAE-extracted features from the vision encoder of the LAION/CLIP-ViT-B-32-DataComp.XL-s13B-b90K model, a 12-layer ViT with  $\approx70\%$ zero-shot accuracy on ImageNet-1k~\cite{imagenetDeng09}. We use open-source Prisma SAEs mapping the 768-dimensional activation space to a dictionary of 49,152 features ($64\times$ expansion)~\citep{prismaJoseph25}. We examine SAEs trained on either mean patch token representations or CLS token representations, across all layers. We analyse features activating above a high threshold (0.1) to capture changes to the highest activating features, and a low threshold (0.001). We generate 10,000 original/successfully-attacked image pairs ($\ell_{\infty}$-norm, $\epsilon=0.02$) for our SAE feature comparisons. This setup allows for an initial exploration of how adversarial perturbations manifest at the level of SAE features.\looseness=-1

\subsection{SAE validity for adversarial inputs}
Given that AEs are out-of-distribution inputs, and our aim is to use standardly trained SAEs to understand them, we first conduct a basic validation. This subsection examines the impact on model accuracy and SAE reconstruction when SAEs are inserted at inference time for both clean and adversarial inputs. The validation has two goals: (i) confirm that SAE reconstruction is faithful enough to preserve the model's clean-input behaviour, and (ii) confirm that the SAE does not filter out the adversarial signal, \ie the attack still succeeds when run through the SAEs. Together these ensure that subsequent feature-level differences reflect real attack-induced changes rather than artefacts of SAE reconstruction.

\begin{table}[h!]
    \centering
    \captionof{table}{Impact on ImageNet-1k classification accuracy when inserting a pre-trained SAE (one layer at a time) during inference for original, $\ell_{\infty}$ adversarial ($\epsilon=0.02$), and random noisy inputs. `Accuracy' refers to the ViT's top-1 accuracy. `Original input + SAE' means the clean ViT activations at a given layer are passed through the SAE then its reconstruction is passed to the next layer. The slight 'de-attacking' effect (improved accuracy for `Adv. input + SAE' vs 'Adv. input') suggests SAEs might filter some adversarial noise, but overall, the model still largely misclassifies, indicating salient adversarial changes for misclassification are processed.}
    \begin{tabular}{lcccc}
    \toprule
    \textbf{Metric} & \textbf{Layer 2} & \textbf{Layer 5} & \textbf{Layer 8} & \textbf{Layer 11} \\
    \midrule
    \multicolumn{5}{l}{\textbf{Accuracy}} \\
    \midrule
    Original input           & 1.00 & 1.00 & 1.00 & 1.00 \\
    Original input + SAE     & 0.99 & 0.99 & 0.99 & 0.86 \\
    Adv. input           & 0.00 & 0.00 & 0.00 & 0.00 \\
    Adv. input + SAE  & 0.02 & 0.01 & 0.01 & 0.11 \\
    Noisy input                 & 0.98 & 0.98 & 0.98 & 0.98 \\
    Noisy input + SAE      & 0.98 & 0.98 & 0.98 & 0.85 \\
    \midrule
    \multicolumn{5}{l}{\textbf{SAE Improvement}} \\
    \midrule
    Original input            & -0.01 & -0.01 & -0.01 & -0.14 \\
    Adv. input          &  0.02 &  0.01 &  0.01 &  0.11 \\
    Noisy input                & -0.00 & -0.00 & -0.00 & -0.13 \\
    \bottomrule
    \end{tabular}
    \label{tab:clip:sae-accuracy}
\end{table}
        
\subsection{Layer-wise SAE feature differences}
This subsection presents exploratory visualisations comparing SAE feature activations across different layers of the ViT for original, $\ell_{\infty}$ adversarial, and random noisy inputs. These visualisations are intended to highlight potential avenues for future, more in-depth investigation into how adversarial attacks distinctively alter sparse feature representations. We examine:

\begin{itemize} [noitemsep,topsep=0em,leftmargin=*]
    \item Activation count: Histograms comparing the number of SAE features activated above a set threshold in a layer for original, adversarial, and noisy images. This explores whether attacks characteristically alter feature sparsity.

    \item Feature overlap: The degree of overlap (number of common activated features) between (original vs. adversarial) and (original vs. noisy) conditions. This explores if attacks activate a distinctly different set of features.

    \item Jaccard Similarity: The Jaccard similarity between sets of activated features, offering a normalised measure of overlap.
    
    \item Mean activation: Histograms of mean activation values for features active above a threshold. This looks for changes in the intensity of (key) features.

    \item Distinct Features: The count of unique features activated across an image set for each condition. 

\end{itemize}

\subsection{CLS token SAE features with high activation threshold (0.1)}

This subsection presents a preliminary analysis of SAE features derived from CLS tokens, only considering features activated above a threshold of 0.1. Figures \ref{fig:cls_high_act_count_hist} through \ref{fig:cls_high_mean_thresh_hist} visualise and compare the histogram of activation counts, feature overlap, Jaccard similarity, and histogram of mean activations, respectively, for original, adversarial, and random noisy inputs across different ViT layers.

\begin{figure}[h!]
\centering
\includegraphics[width=\textwidth]{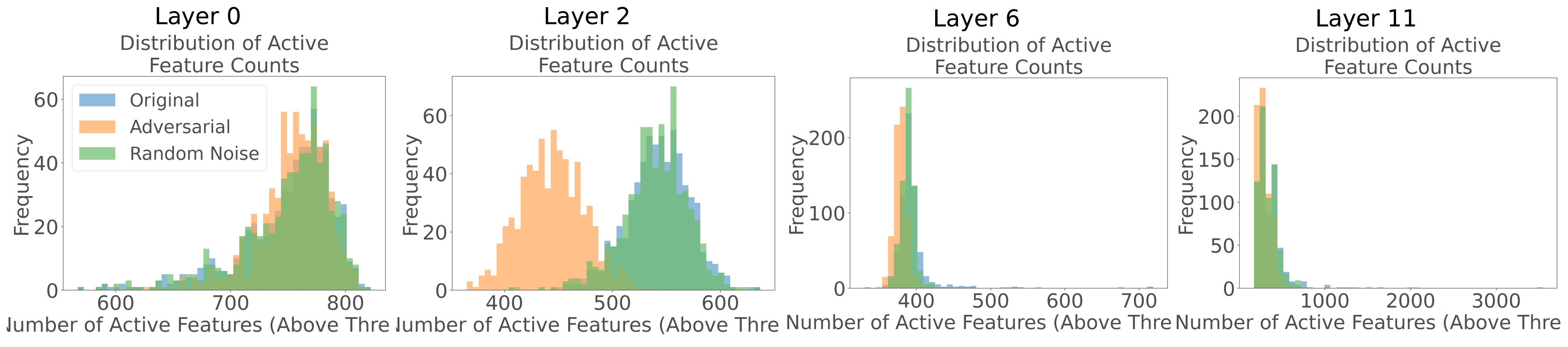}
\caption{Histogram of activation counts for CLS token SAE features (threshold 0.1).}
\label{fig:cls_high_act_count_hist}
\end{figure}

\begin{figure}[h!]
\centering
\includegraphics[width=\textwidth]{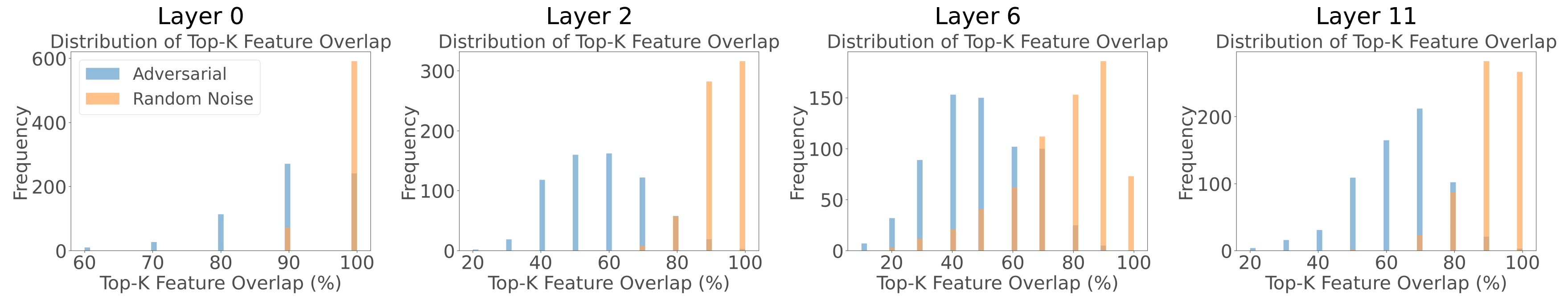}
\caption{Feature overlap between original vs. attacked and original vs. noisy images for CLS token SAE features (threshold 0.1).}
\label{fig:cls_high_feat_overlap}
\end{figure}

\begin{figure}[h!]
\centering
\includegraphics[width=\textwidth]{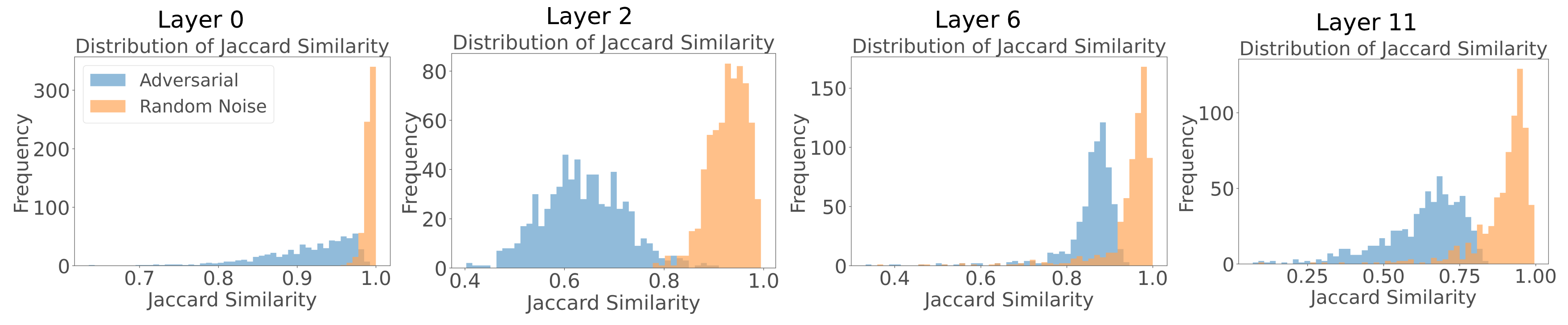}
\caption{Comparison of Jaccard similarity for CLS token SAE features (threshold 0.1).}
\label{fig:cls_high_jaccard_sim}
\end{figure}

\begin{figure}[h!]
\centering
\includegraphics[width=\textwidth]{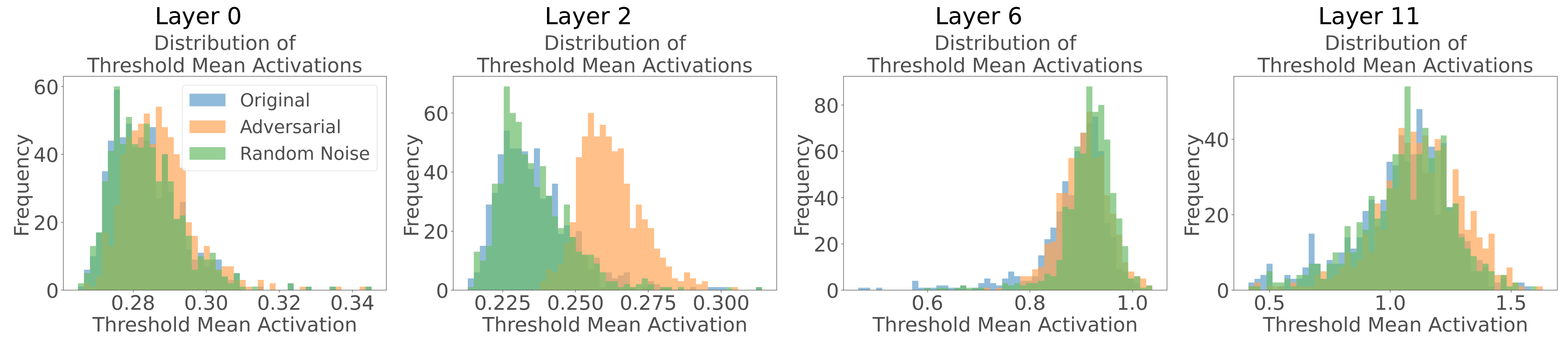}
\caption{Histogram of mean thresholds for CLS token SAE features (threshold 0.1).\vspace{-3mm}}
\label{fig:cls_high_mean_thresh_hist}
\end{figure}

\subsection{CLS token SAE features with low activation threshold (0.001)}

We examine SAE features from CLS tokens with a lower activation threshold of 0.001, to not just capture the most intensely activated features. The subsequent figures illustrate the histogram of activation counts (\cref{fig:cls_low_act_count_hist}), feature overlap (\cref{fig:cls_low_overlap}), Jaccard similarity (\cref{fig:cls_low_jaccard_sim}), and histogram of mean activations (\cref{fig:cls_low_mean_thresh_hist}) across layers for original, adversarial, and noisy inputs.\looseness=-1

\begin{figure}[h!]
\centering
\includegraphics[width=\textwidth]{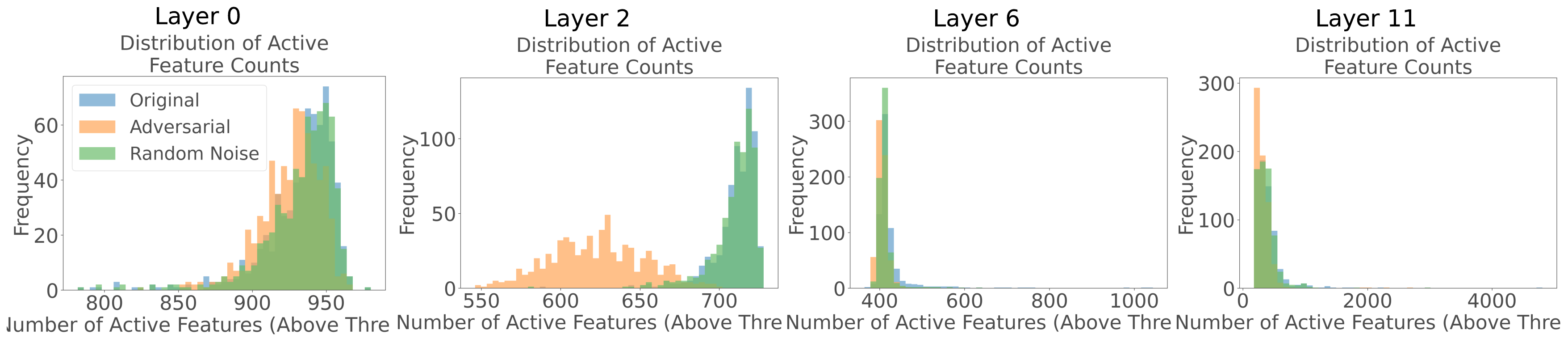}
\caption{Histogram of activation counts for CLS token SAE features (threshold 0.001), with noise.}
\label{fig:cls_low_act_count_hist}
\end{figure}

\vspace{1em}

\begin{figure}[h!]
\centering
\includegraphics[width=\textwidth]{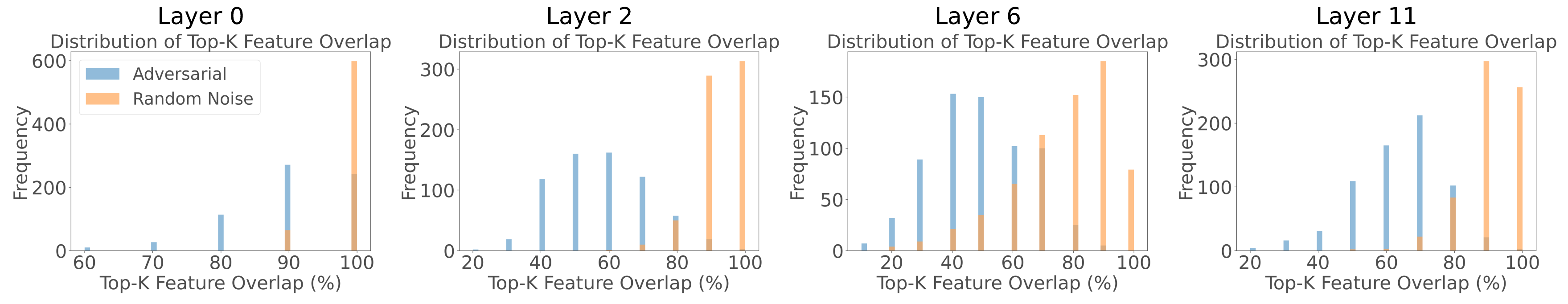}
\caption{Feature overlap between original vs. attacked and original vs. noisy images for CLS token SAE features (threshold 0.001).}
\label{fig:cls_low_overlap}
\end{figure}

\vspace{1em}

\begin{figure}[h!]
\centering
\includegraphics[width=\textwidth]{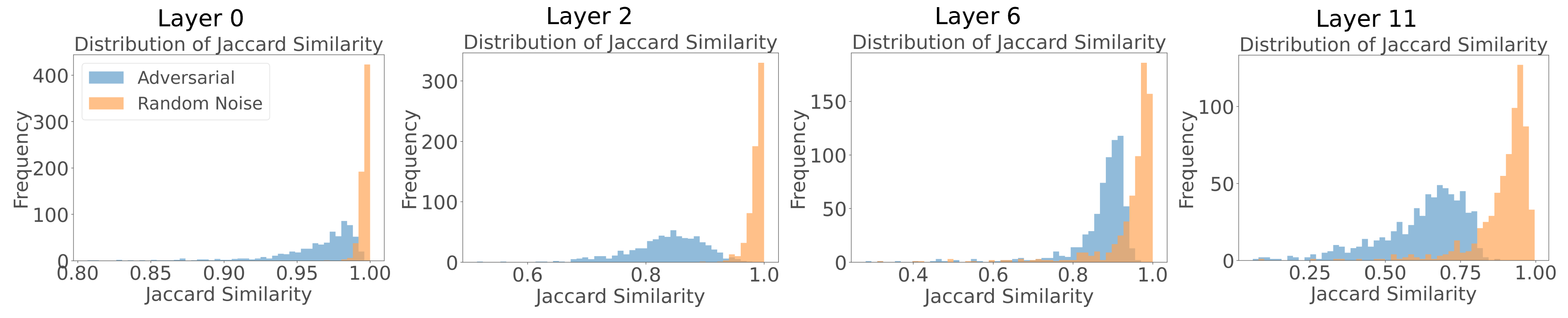}
\caption{Comparison of Jaccard similarity for CLS token SAE features (threshold 0.001).}
\label{fig:cls_low_jaccard_sim}
\end{figure}

\vspace{1em}

\begin{figure}[h!]
\centering
\includegraphics[width=\textwidth]{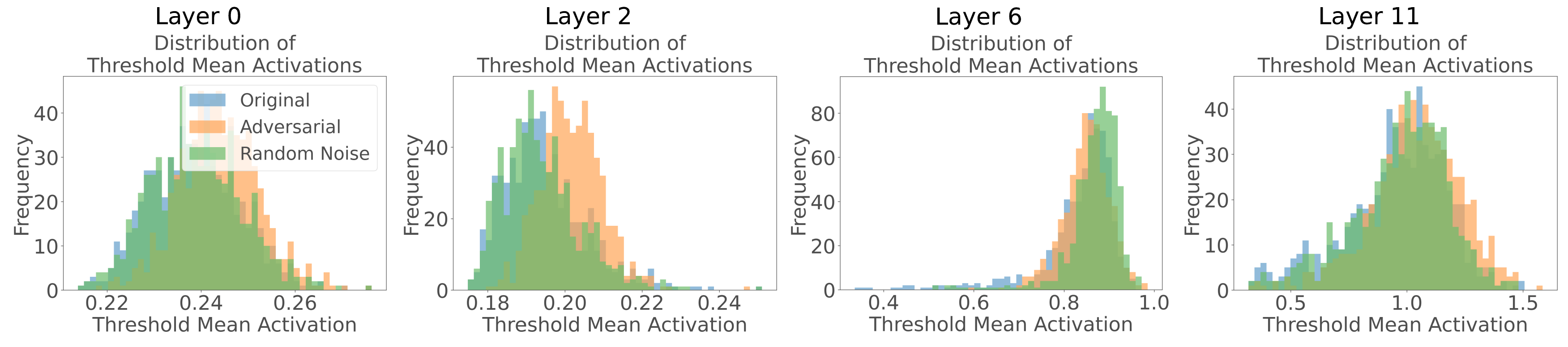}
\caption{Histogram of mean activations for CLS token SAE features (threshold 0.001), with noise.}
\label{fig:cls_low_mean_thresh_hist}
\end{figure}

\vspace{-3mm}
\subsection{Patch token SAE features with high activation threshold (0.1)}
This subsection shifts the focus to SAE features derived from the mean of patch tokens, again using a high activation threshold of 0.1. We present comparisons of activation count histograms (\cref{fig:patch_high_act_count_hist}), distinct feature count histograms (\cref{fig:patch_high_distinct_feat_hist}), feature overlap (\cref{fig:patch_high_feat_overlap}), Jaccard similarity (\cref{fig:patch_high_jaccard_sim}), and mean activation histograms (\cref{fig:patch_high_mean_thresh_hist}) for original, adversarial, and noisy inputs.

\begin{figure}[h!]
\centering
\includegraphics[width=\textwidth]{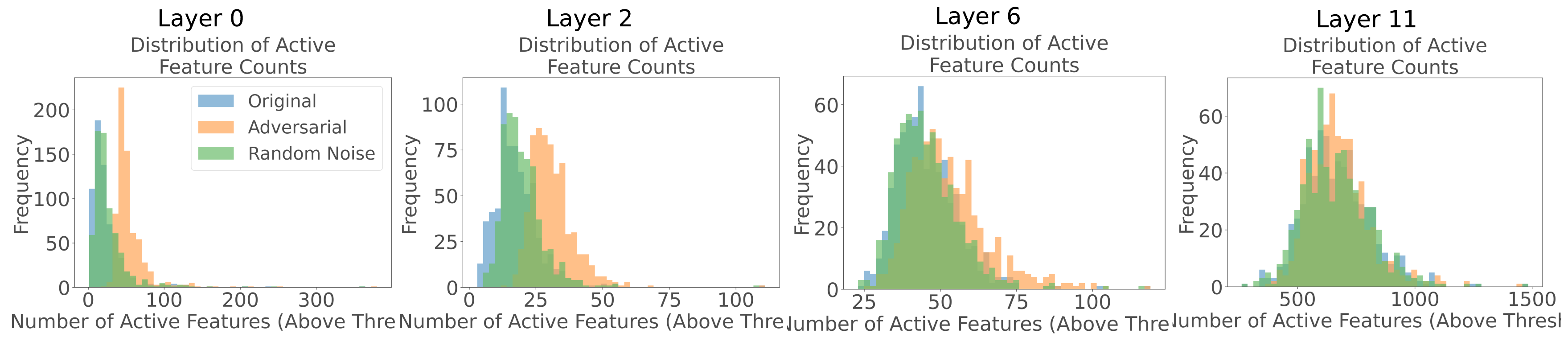}
\caption{Histogram of activation counts for patch token SAE features (threshold 0.1), with noise.}
\label{fig:patch_high_act_count_hist}
\end{figure}

\vspace{1em}

\begin{figure}[h!]
\centering
\includegraphics[width=\textwidth]{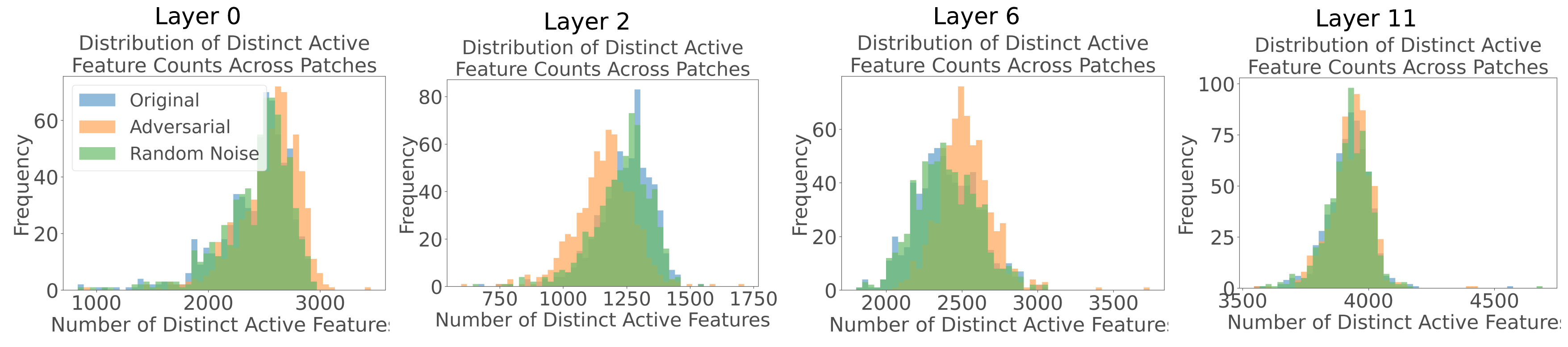}
\caption{Histogram of distinct patch features for patch token SAE features (threshold 0.1), with noise.}
\label{fig:patch_high_distinct_feat_hist}
\end{figure}

\vspace{1em}

\begin{figure}[h!]
\centering
\includegraphics[width=\textwidth]{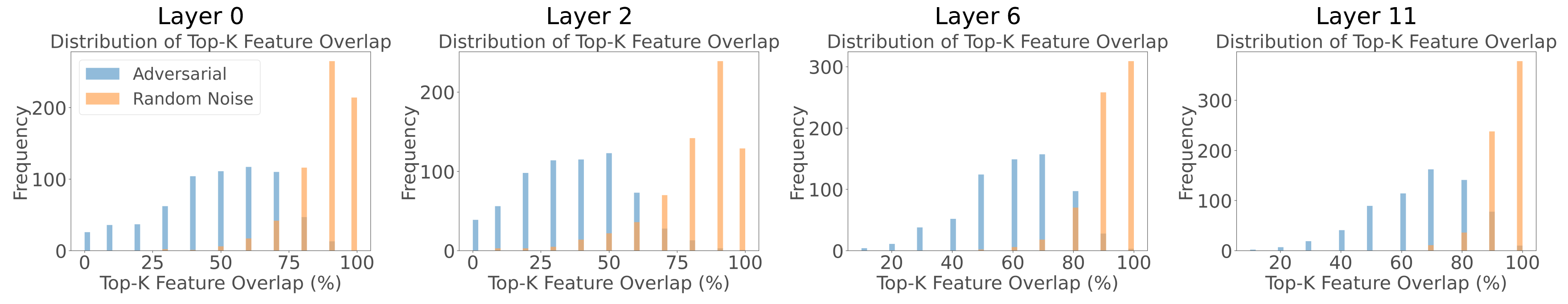}
\caption{Feature overlap between original vs. attacked and original vs. noisy images for patch token SAE features (threshold 0.1).}
\label{fig:patch_high_feat_overlap}
\end{figure}

\vspace{1em}

\begin{figure}[h!]
\centering
\includegraphics[width=\textwidth]{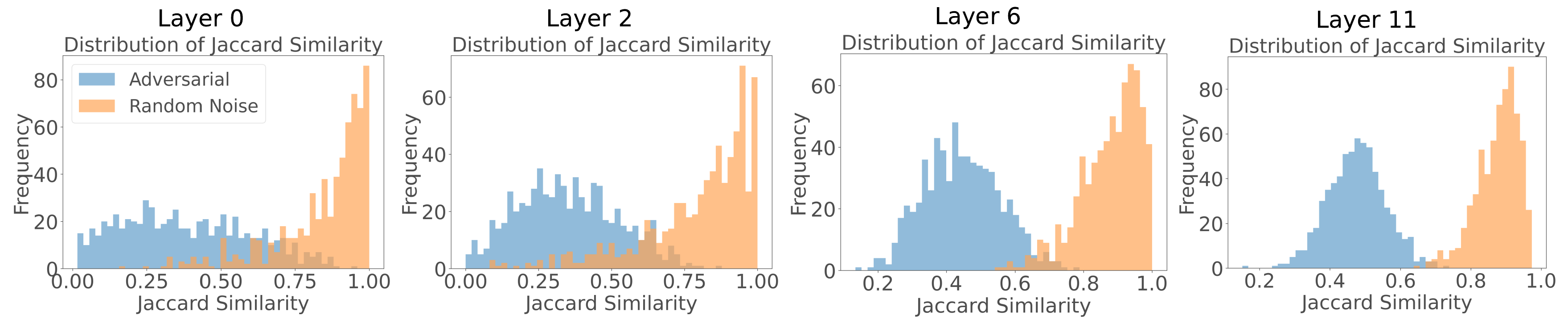}
\caption{Comparison of Jaccard similarity for patch token SAE features (threshold 0.1).}
\label{fig:patch_high_jaccard_sim}
\end{figure}

\vspace{1em}

\begin{figure}[h!]
\centering
\includegraphics[width=\textwidth]{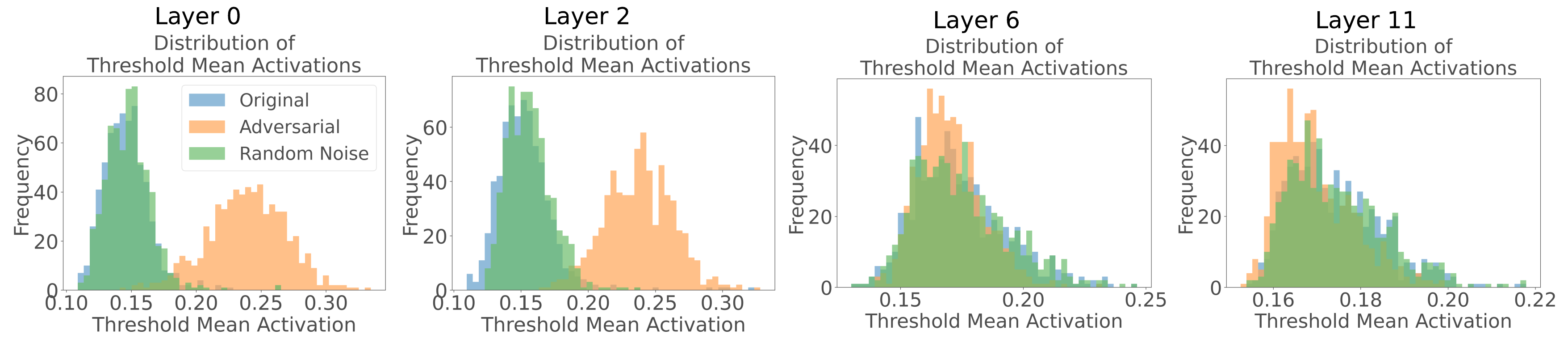}
\caption{Histogram of mean thresholds for patch token SAE features (threshold 0.1), with noise.}
\label{fig:patch_high_mean_thresh_hist}
\end{figure}

\vspace{1em}

\subsection{Patch token SAE features with low activation threshold (0.001) }

Finally, this subsection explores SAE features from the patch tokens means using a low activation threshold of 0.001. The figures show histograms of activation counts (\cref{fig:patch_low_act_count_hist}), distinct feature counts (\cref{fig:patch_low_distinct_feat_hist}), feature overlap (\cref{fig:patch_low_feat_overlap}), Jaccard similarity (\cref{fig:patch_low_jaccard_sim}), and mean activation histograms (\cref{fig:patch_low_mean_thresh_hist}) for original, adversarial, and noisy inputs.

\vspace{1em}

\begin{figure}[h!]
\centering
\includegraphics[width=\textwidth]{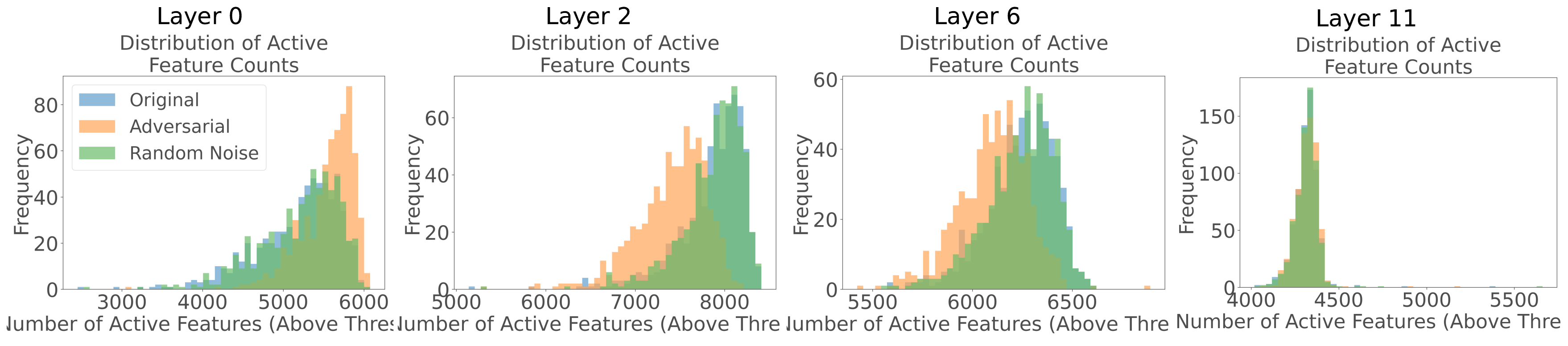}
\caption{Histogram of activation counts for patch token SAE features (threshold 0.001), with noise.}
\label{fig:patch_low_act_count_hist}
\end{figure}

\vspace{1em}

\begin{figure}[h!]
\centering
\includegraphics[width=\textwidth]{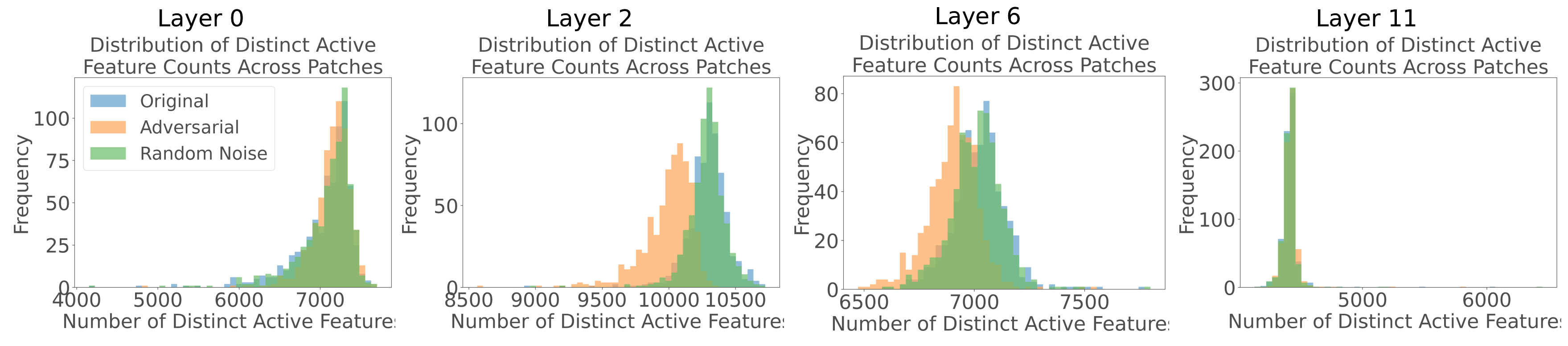}
\caption{Histogram of distinct patch features for patch token SAE features (threshold 0.001), with noise.}
\label{fig:patch_low_distinct_feat_hist}
\end{figure}

\begin{figure}[h!]
\centering
\includegraphics[width=\textwidth]{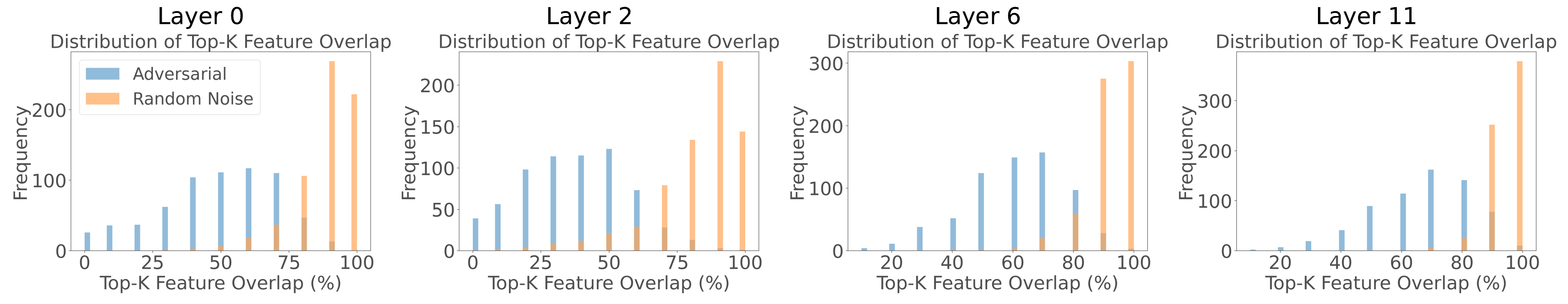}
\caption{Feature overlap between original vs. attacked and original vs. noisy images for patch token SAE features (threshold 0.001).}
\label{fig:patch_low_feat_overlap}
\end{figure}

\begin{figure}[h!]
\centering
\includegraphics[width=\textwidth]{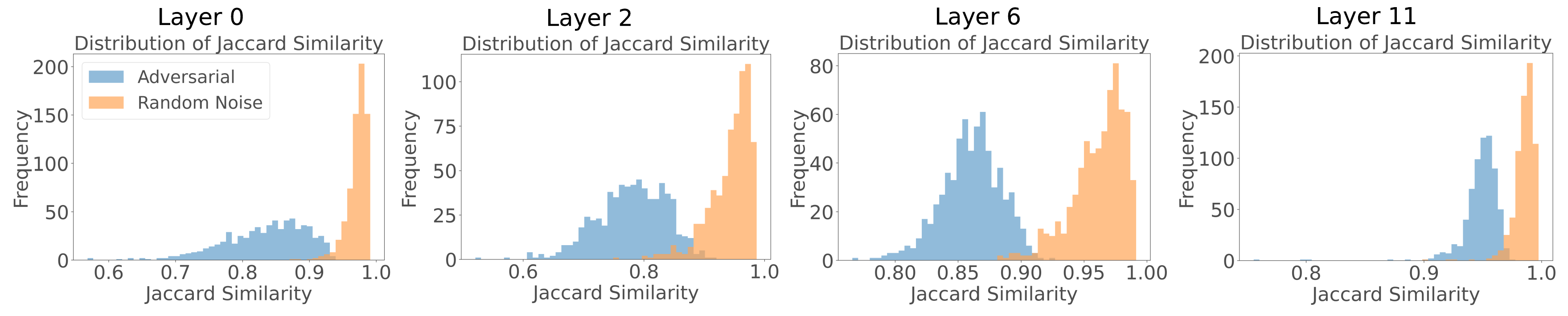}
\caption{Comparison of Jaccard similarity for patch token SAE features (threshold 0.001).}
\label{fig:patch_low_jaccard_sim}
\end{figure}

\begin{figure}[h!]
\centering
\includegraphics[width=\textwidth]{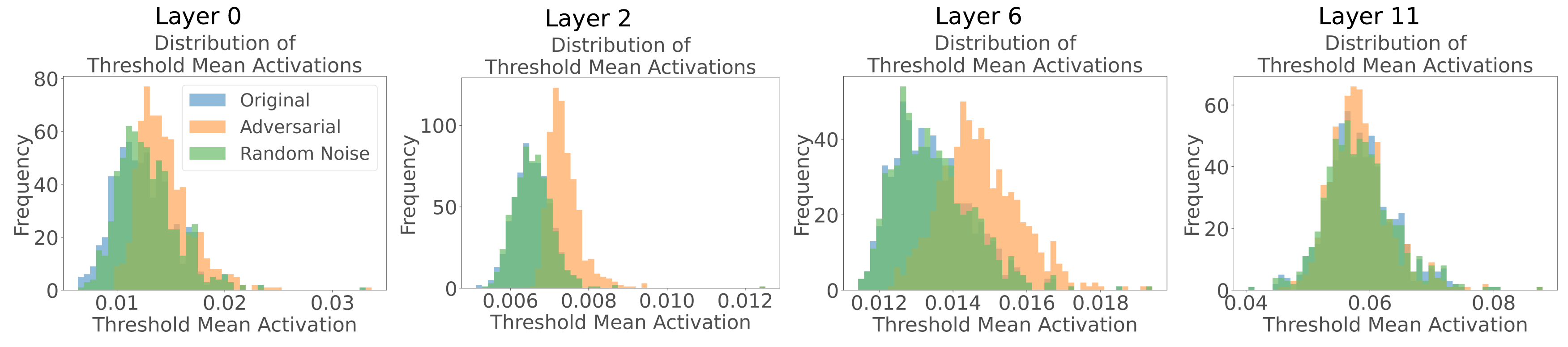}
\caption{Histogram of mean activations for patch token SAE features (threshold 0.001), with noise.}
\label{fig:patch_low_mean_thresh_hist}
\end{figure}

\vspace{4em}

\end{document}